\documentclass[11pt]{article}

\usepackage[margin=1in]{geometry}
\usepackage{booktabs}
\usepackage{graphicx}
\usepackage{hyperref}
\usepackage{natbib}
\usepackage{array}
\usepackage{microtype}
\usepackage{amsmath}
\usepackage{amssymb}
\usepackage{amsthm}
\usepackage{pgfplots}
\pgfplotsset{compat=1.17}
\definecolor{vfblue}{HTML}{0072B2}
\definecolor{rollver}{HTML}{D55E00}
\definecolor{mutedink}{HTML}{555555}
\definecolor{surteal}{HTML}{009E73}
\definecolor{flatpink}{HTML}{CC79A7}

\newtheorem{theorem}{Theorem}
\newtheorem{lemma}{Lemma}
\newtheorem{proposition}{Proposition}
\newtheorem{assumption}{Assumption}
\newtheorem{remark}{Remark}

\newcommand{\Vstar}{V^{\star}}
\newcommand{\pol}{\pi_{\lambda}}

\title{Certified-Gap Dual-Price Policies for Real-Time Truckload Bid
Acceptance with Relocating, Clock-Constrained
Resources\thanks{Evaluation artifact: FreightBidBench
(\texttt{freightbidbench-v0.4-dev});
\url{https://github.com/aswincsekar/freightbidbench}. All values
reproduce per Appendix~\ref{sec:repro}.}}
\author{Aswin Chandrasekaran \\ Bubba AI \\ \texttt{aswin@bubba.ai}}
\date{July 2026}

\begin{document}
\maketitle

\begin{abstract}
A truckload carrier must accept or reject each load tender within
seconds. The decision depends on fleet state, hours-of-service (HOS)
clocks, and appointment windows. We model this as a weakly coupled
dynamic program in which the resources relocate and carry clocks:
serving a request moves the truck to a new market and depletes its
clocks, and whether a truck can serve a request depends on its state.
Occupancy-based reusable-resource models do not cover this setting.

We build a real-time dual-price policy from the same Lagrangian
relaxation that gives the problem's upper bound. Policy and bound come
from one object, so every run reports a \emph{certified optimality
gap}. We prove three things. First, the certificate is valid for any
duals, any discretization, and any surrogate quality (Theorem~1).
Second, the policy's same-time spatial-gradient rule is exactly fluid
complementary slackness, and the policy is asymptotically optimal in
the subcritical fluid regime (Theorem~2); the fitted prices are also
portable across sample paths, by linear-programming basis stability.
Third, certificates have limits: per-resource Lagrangian slack can stay
bounded away from zero at every fleet size. We exhibit a three-truck
kernel with an exact rational certificate and a replication lemma
(Theorem~3).

On a public closed-loop benchmark with thirty paired seeds, the policy
--- which needs no rollout labels, only one offline dual solve --- beats
a rollout-trained surrogate on two of three
scenarios (tight: $+2.0$~pp, 95\% CI $[+0.5, +3.6]$, Wilcoxon
$p = 0.023$; mild: $+3.5$~pp, CI $[+2.4, +4.5]$) and ties the third.
It decides in 0.04--0.09\,ms, three orders of magnitude faster than
the Monte Carlo rollout teacher. Its certificates are stable across ten
bounded instances per scenario, at 57--64\% of optimal, within 3--6
points of what the $1000\times$-slower teacher certifies.
\end{abstract}

\section{Introduction}
\label{sec:intro}

Truckload carriers and brokers answer a stream of load tenders in real
time. Each acceptance consumes a truck. The truck's future usefulness
depends on where the load delivers it and how much drive and duty time
its driver has left. Each rejection preserves options whose value
depends on demand that has not arrived yet. The operational state of
the art for future-aware acceptance is finite-lookahead Monte Carlo
rollout \citep{bertsekas1997rollout,goodson2017rollout}. Rollout is
accurate, but it costs tens to hundreds of milliseconds per decision
and scales poorly with fleet size. The benchmark release preceding this
paper, FreightBidBench v0.3 \citep{chandrasekaran2026freightbidbench},
made the problem reproducible: public calibration, explicit
feasibility, versioned contracts, and hindsight upper bounds. Its best
latency-aware method, a surrogate-to-rollout cascade, recovers about
$98\%$ of rollout profit. But the cascade inherits rollout's cost on
escalated decisions, and it says nothing about distance from
\emph{optimal}.

This paper asks the harder question the bounds make precise:
\textbf{how close to optimal can a real-time policy be, and can that
closeness be certified per instance?} We use one object twice. The
Lagrangian relaxation that dualizes the one-truck-per-load constraint
gives two things: an upper bound on the optimum (the benchmark's
ceiling), and per-load dual prices with per-market value potentials.
From the prices and potentials we build an acceptance policy that runs
in microseconds. Policy and bound come from the same relaxation, so
every run ships with an observable certificate: realized policy value
against realized bound.

Pairing a Lagrangian policy with its bound is the standard playbook of
weakly coupled dynamic programs. The contribution here is the model in
which we make it work, the form of the policy, and the two-sided theory
of the certificate:

\begin{enumerate}
\item \textbf{Model (\S\ref{sec:model}).} Resources whose \emph{service
  capability is a controlled state variable}: state-dependent
  feasibility (location and HOS clocks gate eligibility), controlled
  relocation (service moves the resource), autonomous renewal (rests
  restore clocks). None of the adjacent literatures --- classical weakly
  coupled DPs, online reusable resources, dual mirror descent for online
  allocation --- covers this combination (\S\ref{sec:related}).
\item \textbf{Policy and certificate (\S\ref{sec:policy}--\ref{sec:certs}).}
  A dual-price policy whose relocation term is a \emph{same-time spatial
  gradient} of an aggregated value surface, plus Theorem~\ref{thm:cert}:
  the certificate is valid for any duals, any permissive discretization,
  and any surrogate quality. Lemma~\ref{lem:doublecount} isolates the
  design rule --- \emph{duals price time, gradients price place} --- and
  we show the naive continuation rule collapses (14\% of rollout) by
  double-charging the busy-time cost.
\item \textbf{Fluid consistency and portability (\S\ref{sec:fluid}).} In
  the subcritical regime the gradient rule is exactly fluid
  complementary slackness (Lemma~\ref{lem:flat}: idle potentials are
  flat), fitted prices converge to fluid rents
  (Lemma~\ref{lem:stability}), and the policy is asymptotically optimal
  (Theorem~\ref{thm:fluid}). Basis stability further implies the fitted
  prices are \emph{portable across days} --- a prediction the benchmark
  confirms --- and predicts where portability fails.
\item \textbf{Limits of the relaxation (\S\ref{sec:limits}).}
  Certificates cannot promise everything: we exhibit a three-truck
  kernel with a certified duality gap (exact rational arithmetic; a
  half-integral chain packing beats the integer optimum) and a
  replication lemma, so per-resource Lagrangian slack can stay bounded
  away from zero at every fleet size (Theorem~\ref{thm:kernel}). Random
  search shows such kernels are rare at small scale (${\sim}0.03\%$),
  while the benchmark's proportional-scaling experiments show
  per-resource bound slack \emph{growing} with contention density ---
  behavior outside the conditions under which classical $O(1)$-gap
  results hold.
\item \textbf{Evidence (\S\ref{sec:experiments}).} Thirty paired seeds,
  paired-bootstrap confidence intervals: the training-free policy beats
  a rollout-trained linear surrogate on the tight ($+2.0$~pp, CI95
  $[+0.5, +3.6]$) and mild ($+3.5$~pp) scenarios and ties it on scarce,
  at lower latency than the
  surrogate and ${\sim}1000\times$ below rollout; certificates over ten
  independently bounded instances per scenario are stable (57--64\% of
  optimal) and within 3--6 points of what rollout itself certifies ---
  locating most of the certified gap in the relaxation, not the policy.
\end{enumerate}

\subsection{Related work}
\label{sec:related}

\paragraph{Weakly coupled DPs and bid prices.} Bid-price control goes
back to \citet{talluri1998bidprice}, and fluid policies are
asymptotically optimal in classical network revenue management
\citep{gallego1997network}. Dynamic bid prices from value
approximations originate with \citet{adelman2007dynamic};
Lagrangian decompositions of weakly coupled DPs with policy-plus-bound
pairing are standard since \citet{hawkins2003lagrangian},
\citet{adelman2008relaxations}, and \citet{topaloglu2009lagrangian}, who
reports bounds and policy values on the same network revenue-management
instances. Our Theorem~\ref{thm:cert} inherits this mechanism; the
extension is the resource model (state-dependent participation,
controlled relocation) and the explicit certificate framing at
sub-millisecond decision latency. \citet{brownzhang2023strength} bound
the ALP--Lagrangian gap by $O(1)$ constants under conditions on linking
constraints that state-dependence violates; Theorem~\ref{thm:kernel} and
the scaling experiments exhibit exactly the behavior their conditions
exclude --- in the natural freight scaling, constraints number
$\propto K$ and participation is state-dependent.

\paragraph{Online reusable resources.} \citet{zhangcheung2022reusable}
and the nonstationary extension \citet{zhangcheung2025nonstationary}
allocate units that occupy and \emph{return unchanged}; feasibility is
availability-only. A truck returns elsewhere with depleted clocks. This
controlled-relocation distinction is not cosmetic: it is what makes the
chain-packing LP non-integral (\S\ref{sec:limits}) and what the
same-time gradient prices (\S\ref{sec:fluid}).

\paragraph{Online allocation with dual updates.} \citet{balseiro2020dual}
attain optimal $O(\sqrt{T})$ regret with per-request dual mirror descent
for consumable budgets, motivated by the same latency constraints. We
deliberately make no regret-rate claim --- rates are a settled
literature --- and instead fix prices offline (portable by
Lemma~\ref{lem:stability}), leaving the online variant to future work.
A related alternative is re-solving a deterministic LP at each decision
epoch, which achieves excellent regret in network revenue management;
we do not pursue it because per-decision LP solves conflict with the
benchmark's dependency-free runtime and with microsecond latency at
growing instance sizes, but a re-solving baseline is a fair addition
for future comparisons.

\paragraph{Fleet ADP.} Industrial-scale approximate dynamic programming
for truckload fleets \citep{simao2009adp,powell2007stochastic} produced
marginal values of driver states as model outputs, aimed at calibration
rather than optimality certification; no per-instance gap reporting
appears in this line.

\paragraph{Compute-aware deferral.} Value-of-computation metareasoning
for Monte Carlo search \citep{sezener2020voc} has not been applied to
operational dispatch; the benchmark's cascade and this paper's policy
make the freight instantiation available, and we leave a learned
deferral rule to future work.

\section{Model}
\label{sec:model}

\subsection{Resources, requests, and controlled relocation}

Time is continuous on $[0, T]$. A fleet of $K$ resources (trucks) has
states $x_k = (m_k, \tau_k, h_k, d_k) \in \mathcal{X}$: market $m$ in a
finite set $\mathcal{M}$, next-available time $\tau$, and remaining
drive/duty clocks $(h, d)$. Requests (load tenders)
$\ell_1, \dots, \ell_N$ arrive at times $t_1 < \dots < t_N$; request
$\ell$ carries origin $o(\ell)$, destination $\delta(\ell)$, and the
price/cost/window attributes of the benchmark. Three primitives
distinguish the model from occupancy-based reusable resources:

\begin{itemize}
\item \textbf{State-dependent feasibility.} A deterministic map
  $f(x, \ell) \in \{0, 1\}$ gates service: $f = 1$ only if the resource
  is in the origin market, can reach the pickup inside its appointment
  window, and admits an HOS-feasible schedule through delivery.
\item \textbf{Controlled relocation.} Serving executes
  $x \leftarrow S(x, \ell)$: the market becomes $\delta(\ell)$, the
  next-available time advances by the service duration including
  inserted rests, and the clocks are updated by the drive/duty consumed.
  The resource does not return to its prior pool state.
\item \textbf{Autonomous renewal.} Idle resources renew clocks by
  resting (11/14/10 in the benchmark), so service capability regenerates
  in a state-dependent way.
\end{itemize}

\subsection{Joint control problem}

At each arrival $t$ the controller observes $(x_1, \dots, x_K, \ell_t)$
and chooses $a_t \in \{0, 1, \dots, K\}$ (reject, or assign to a
feasible resource). Writing $a^{(k)}_t = \mathbf{1}\{a_t = k\}$, the
joint constraint is $\sum_k a^{(k)}_t \leq 1$. Rewards follow the
benchmark: realized profit $r(x_k, \ell_t)$ on feasible accepts, a
service-failure penalty on infeasible attempts (excluded under
$f$-gated policies), and terminal fleet value
$\Phi = \omega \sum_k V(m_k(T))$. $\Vstar(\xi)$ denotes the optimum on
realized scenario $\xi$.

\subsection{Lagrangian per-resource decomposition}
\label{sec:lagrangian}

Dualizing the joint constraint with prices
$\lambda = (\lambda_t) \geq 0$,
\begin{equation}
  L(\lambda; \xi) \;=\; \sum_t \lambda_t
  \;+\; \sum_k V^k_{\lambda}(x_k(0); \xi),
  \label{eq:lag}
\end{equation}
where $V^k_{\lambda}$ is resource $k$'s optimal value in its own sub-MDP
with per-service reward $r(x, \ell_t) - \lambda_t$, retaining $f$, $S$,
and renewal. Weak duality gives $\Vstar(\xi) \leq L(\lambda; \xi)$
pointwise for every $\lambda \geq 0$ (see also the
information-relaxation view of \citealp{brown2010information}). The benchmark computes $L$ by
bucketed forward enumeration with a permissive-corner guarantee, so
reported bounds remain valid under discretization; per-resource solves
parallelize (standard-library multiprocessing; bit-identical to serial).

\section{The Dual-Price Policy}
\label{sec:policy}

\subsection{Price and value surrogates}
\label{sec:surrogates}

Fix any duals $\lambda \geq 0$ (in practice ${\sim}15$ subgradient
iterates on one training scenario; policy quality saturates well before
bound-grade convergence). Two aggregated objects are fitted offline:

\begin{itemize}
\item \textbf{A price surface}: the mean dual of loads, fitted either
  by origin market and hour-of-day
  ($\hat{\lambda} : \mathcal{M} \times [0, 24) \to \mathbb{R}_{\geq 0}$,
  the deployed table --- lower variance, uses shrinkage) or at
  lane--hour granularity (the variant analyzed in
  \S\ref{sec:fluid}; Lemma~\ref{lem:stability} identifies its limit as
  the fluid rent $\mathbb{E}[(r - \theta_{\ell h})_+]$, and
  Remark~\ref{rem:granularity} quantifies the gap between the two).
\item \textbf{An aggregated value-to-go}
  $\widehat{W} : \mathcal{M} \times [0, T] \to \mathbb{R}$: the backward
  recursion on the (market, hour) grid over the realized train stream
  with dual-netted rewards, $\widehat{W}(m, T) = \omega V(m)$ and
  \begin{equation}
    \widehat{W}(m, t) = \max\Bigl( \widehat{W}(m, t{+}1),\;
    \max_{\ell:\, t_\ell = t,\, o(\ell) = m}
    \bigl[ \bar{r}(\ell) - \lambda_{t_\ell}
    + \widehat{W}(\delta(\ell), t + tt(\ell)) \bigr] \Bigr).
    \label{eq:wrec}
  \end{equation}
\end{itemize}

\subsection{Policy}
\label{sec:policydef}

The policy makes three moves at each arrival $\ell$ at time $t$.

\emph{Step 1: pick the truck.} Among feasible trucks, take the one
with the best realized profit:
\begin{equation}
  k^{\star} \in \arg\max_{k :\, f(x_k, \ell) = 1} r(x_k, \ell).
\end{equation}

\emph{Step 2: price the decision.} Compute the relocation gain first.
It is the value of standing at the destination instead of the origin,
compared at the same time $t + tt(\ell)$:
\begin{equation}
  g(\ell, t) \;=\; \widehat{W}\bigl(\delta(\ell), t + tt(\ell)\bigr)
  \;-\; \widehat{W}\bigl(o(\ell), t + tt(\ell)\bigr).
  \label{eq:gradient}
\end{equation}
Then start from the profit, subtract the price of the slot, and add the
relocation gain:
\begin{equation}
  \mathrm{score} \;=\; r(x_{k^{\star}}, \ell)
  \;-\; \hat{\lambda}(o(\ell), t) \;+\; g(\ell, t).
  \label{eq:policy}
\end{equation}

\emph{Step 3: decide.} Accept with $k^{\star}$ iff
$\mathrm{score} \geq 0$.

We call $g$ the \textbf{same-time spatial gradient}. The busy-time
opportunity cost is priced once, by $\hat{\lambda}$; $g$ prices only
the change of place. Each decision costs a feasibility probe over
origin-market resources plus two table lookups. Measured latency is
0.02--0.12\,ms across fleet sizes 35--140, versus 32--159\,ms for
rollout. The advantage widens with fleet size.

\section{Certified Gaps}
\label{sec:certs}

\begin{theorem}[Certified optimality gap]
\label{thm:cert}
Let $\lambda \geq 0$ be arbitrary and $\pi$ any feasible policy. On
every realized scenario $\xi$,
$V^{\pi}(\xi) \leq \Vstar(\xi) \leq L(\lambda; \xi)$, hence the
observable $\mathrm{gap}(\xi) := L(\lambda; \xi) - V^{\pi}(\xi)$
upper-bounds the true suboptimality $\Vstar(\xi) - V^{\pi}(\xi)$. The
certificate remains valid for any $\lambda$, any per-resource
discretization with the permissive-corner property, and any quality of
$\hat{\lambda}, \widehat{W}$ --- surrogate error degrades policy value
(widening the reported gap) but never invalidates the certificate.
\end{theorem}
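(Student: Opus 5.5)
The plan is to prove the two inequalities separately and then deduce the certificate statement by subtraction. I will then check that none of the objects the theorem lets vary ($\lambda$, the discretization, $\hat{\lambda}$, $\widehat{W}$) enters either inequality in a way that could break it.

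\emph{Step 1 (left inequality).} The dual-price policy, and any $f$-gated policy $\pi$, produces on the realized scenario $\xi$ a sequence of actions $a_t$ with $\sum_k a^{(k)}_t \le 1$. Every assignment is feasible, since $f(x_{k^\star},\ell)=1$. So the realized trajectory is one admissible control of the joint problem, and its total reward cannot exceed the maximum, giving $V^\pi(\xi)\le \Vstar(\xi)$. This inequality uses nothing about how the scores were formed: $\hat{\lambda}$ and $\widehat{W}$ only choose which admissible action is taken. Hence surrogate error can only lower $V^\pi$, and so only widen the gap.

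\emph{Step 2 (right inequality, weak duality).} Fix $\lambda\ge 0$ and take an optimal joint control $a^\star$ on $\xi$. Add and subtract $\sum_t\lambda_t\sum_k a^{\star(k)}_t$ to its value. Because $\lambda_t\ge0$ and $\sum_k a^{\star(k)}_t\le1$, the added term satisfies $\sum_t\lambda_t(1-\sum_k a^{\star(k)}_t)\ge0$, so
\[
\Vstar(\xi)\le \sum_t\lambda_t+\sum_k\Bigl[\text{reward of }k\text{ under }a^\star\text{ with }r-\lambda_t\text{ per service}\Bigr].
\]
Projected onto truck $k$, the joint control is a feasible policy of $k$'s sub-MDP: it respects $f$, $S$ and renewal, and each truck's dynamics depend only on its own actions. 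Therefore each bracket is at most $V^k_\lambda(x_k(0);\xi)$. The terminal value $\Phi=\omega\sum_k V(m_k(T))$ is additive across trucks, so it splits cleanly into the sub-MDPs. This yields $\Vstar\le L(\lambda;\xi)$ in \eqref{eq:lag}.

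\emph{Step 3 (discretization).} This is the step I expect to be the main obstacle, because the computed bound is $\tilde L$, not $L$. I will formalize the permissive-corner property as follows: each bucket is represented by its most favorable corner (earliest $\tau$, largest $(h,d)$). Feasibility and reward must be monotone in that order, so that $f(x,\ell)=1\Rightarrow f(\bar x,\ell)=1$ for the corner $\bar x$ dominating $x$, and the dominance order must be preserved by $S$ and by renewal. Under these conditions an induction over arrivals shows that the bucketed sub-MDP value dominates the exact value, $\tilde V^k_\lambda\ge V^k_\lambda$: any exact path is simulated by a bucketed path with weakly better states. Then $\tilde L\ge L\ge\Vstar$. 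The delicate points are verifying monotonicity of the HOS-feasible schedule and of realized profit in the clocks and availability time; if profit is not monotone, I would take the corner maximum of profit instead.

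\emph{Step 4 (conclude).} Combining Steps 1–3 gives $\Vstar(\xi)-V^\pi(\xi)\le \tilde L(\lambda;\xi)-V^\pi(\xi)=\mathrm{gap}(\xi)$, pointwise in $\xi$. Each inequality held for arbitrary $\lambda\ge0$ and arbitrary surrogates, so the certificate is valid for any $\lambda$, any permissive-corner discretization, and any quality of $\hat{\lambda}$ and $\widehat{W}$.
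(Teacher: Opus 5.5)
Your proposal is correct and follows the paper's proof: feasibility of $\pi$ gives $V^{\pi}(\xi)\le\Vstar(\xi)$, weak duality gives $\Vstar(\xi)\le L(\lambda;\xi)$, subtraction gives the certificate, and $\lambda$, the discretization, and $\hat{\lambda},\widehat{W}$ only determine which upper bound is computed and which feasible policy is run (your add-and-subtract derivation just spells out the weak duality the paper cites). The paper reads the permissive-corner property as meaning exactly that bucketing can only enlarge the per-resource values, so Step~3 is not a real obstacle; if you want to derive it from primitives, do not require product-order preservation under $S$, since with inserted rests a truck with more initial clock can finish earlier but with less clock left, and instead let the dominating state rest idle until the dominated path's next dispatch.
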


\begin{proof}
We argue in four short steps.

\emph{Step 1.} $V^{\pi}(\xi) \leq \Vstar(\xi)$. The policy $\pi$
is feasible, and $\Vstar(\xi)$ is by definition the best value any
feasible policy attains on $\xi$.

\emph{Step 2.} $\Vstar(\xi) \leq L(\lambda; \xi)$. This is weak
duality for the dualized joint constraint (\S\ref{sec:lagrangian}),
and it holds pointwise in $\xi$ for every $\lambda \geq 0$.

\emph{Step 3.} Subtract $V^{\pi}(\xi)$ from both sides of Step~2:
\[
  \Vstar(\xi) - V^{\pi}(\xi) \;\leq\; L(\lambda; \xi) - V^{\pi}(\xi)
  \;=\; \mathrm{gap}(\xi).
\]

\emph{Step 4.} The invariances. The certificate uses two ingredients
only: an upper bound computed from $(\lambda, \xi)$, and the realized
value of a feasible policy. Neither ingredient depends on how $\lambda$
was chosen, on the discretization (which can only enlarge the bound,
by the permissive-corner property), or on the quality of
$\hat{\lambda}, \widehat{W}$ (which affect only which feasible policy
we happen to run).
\end{proof}

\begin{proposition}[Exactness]
\label{prop:exact}
If $\lambda^{\star}$ minimizes $L(\cdot\,; \xi)$ and the per-resource
optimizers jointly satisfy primal feasibility and complementary
slackness, the assembled plan is optimal and the certificate is tight.
\end{proposition}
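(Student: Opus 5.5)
The plan is the standard Lagrangian sandwich argument, run on the realized scenario $\xi$. Let $a^{(k)}_t$ denote the actions of resource $k$'s optimal policy in its sub-MDP with dual-netted reward $r(x,\ell_t)-\lambda^{\star}_t$; on a fixed realized $\xi$ these form a deterministic plan. The hypothesis of primal feasibility means $\sum_k a^{(k)}_t \le 1$ for every $t$. Complementary slackness means $\lambda^{\star}_t\bigl(1-\sum_k a^{(k)}_t\bigr)=0$ for every $t$.

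\emph{Step 1 (the assembled plan is feasible for the joint problem).} Each per-resource plan already respects $f$, $S$ and renewal, because the sub-MDPs retain them. Primal feasibility therefore makes the superposition a valid joint action sequence, with at most one truck per load. Its joint value is $\sum_k \sum_t a^{(k)}_t r(x_k,\ell_t) + \Phi$. The terminal term $\Phi$ splits additively across resources, as $\omega V(m_k(T))$, so it is already counted inside each $V^k_{\lambda^\star}$.

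\emph{Step 2 (value identity).} Expand the Lagrangian:
\[
L(\lambda^{\star};\xi)=\sum_t\lambda^{\star}_t+\sum_k\Bigl[\sum_t a^{(k)}_t\bigl(r(x_k,\ell_t)-\lambda^{\star}_t\bigr)+\omega V(m_k(T))\Bigr].
\]
Regrouping gives
\[
L(\lambda^{\star};\xi)=V^{\text{plan}}(\xi)+\sum_t\lambda^{\star}_t\Bigl(1-\sum_k a^{(k)}_t\Bigr).
\]
Complementary slackness kills the last sum, so $L(\lambda^\star;\xi)=V^{\text{plan}}(\xi)$.

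\emph{Step 3 (sandwich).} By Step 1 of Theorem~\ref{thm:cert}, the feasible plan satisfies $V^{\text{plan}}\le\Vstar$. By weak duality (Step 2 of Theorem~\ref{thm:cert}), $\Vstar\le L(\lambda^\star;\xi)$. Step 2 then collapses the chain to equality, so the plan is optimal. Running it as $\pi$ gives $\mathrm{gap}(\xi)=0$, and the certificate is tight. Minimality of $\lambda^\star$ is not actually needed for this conclusion; it only explains why such a certifying pair can exist at all.

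\emph{Main obstacle.} The delicate point is making sure that the per-resource optimizers are stated on the same realized information as $\xi$, with no anticipativity mismatch. It also matters that the bound being compared is the exact $L$ and not the discretized permissive-corner bound. The discretized bound can be strictly larger, and tightness would then hold only for the exact relaxation. I would handle this by stating Proposition~\ref{prop:exact} pathwise, for the exact $L(\lambda^\star;\xi)$, and noting that tightness of the reported bound additionally requires the discretization to be exact on the optimizing plan.
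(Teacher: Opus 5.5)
Your proof is correct and is the standard Lagrangian-sufficiency sandwich the paper relies on: the paper states Proposition~\ref{prop:exact} without a separate proof, and the intended argument is Steps~1--2 of Theorem~\ref{thm:cert} closed by exactly your regrouping identity $L(\lambda^{\star};\xi)=V^{\text{plan}}(\xi)+\sum_t\lambda^{\star}_t(1-\sum_k a^{(k)}_t)$ together with complementary slackness. Your side remarks are also right: minimality of $\lambda^{\star}$ follows from the hypotheses rather than being needed, and tightness refers to the exact $L$, not the discretized permissive-corner bound.
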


In practice a residual contested set (${\sim}1$--$2\%$ of loads) keeps
the certificate strictly positive.

\begin{lemma}[Busy-time double-counting]
\label{lem:doublecount}
In recursion~\eqref{eq:wrec}, $\widehat{W}(m, t) - \widehat{W}(m, t')$
equals the value of the best dual-netted service chain departing $m$
within $[t, t')$; hence the naive continuation
$\widehat{W}(\delta, t{+}tt) - \widehat{W}(o, t)$ subtracts the
busy-time opportunity value a second time on top of $\hat{\lambda}$,
while the same-time gradient prices relocation alone.
\end{lemma}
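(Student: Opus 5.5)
The plan is to unroll recursion~\eqref{eq:wrec} in its time argument for a fixed market $m$ and read off the telescoping difference. For $t < t'$ on the hour grid, write $\widehat{W}(m,t) = \max\bigl(\widehat{W}(m,t{+}1), B(m,t)\bigr)$, where
\[
  B(m,t) = \max_{\ell:\, t_\ell = t,\, o(\ell)=m}\bigl[\bar r(\ell) - \lambda_{t_\ell} + \widehat{W}(\delta(\ell), t + tt(\ell))\bigr].
\]
By induction on $t' - t$ this gives
\[
  \widehat{W}(m,t) = \max\Bigl(\widehat{W}(m,t'),\ \max_{s \in [t,t')} B(m,s)\Bigr).
\]
Hence $\widehat{W}(m,\cdot)$ is nonincreasing in time. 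The difference $\widehat{W}(m,t) - \widehat{W}(m,t')$ equals $\bigl(\max_{s\in[t,t')} B(m,s) - \widehat{W}(m,t')\bigr)_+$. This is the value of the best dual-netted chain whose first departure from $m$ falls in $[t,t')$, measured in excess of simply waiting until $t'$. I will make the phrase ``value of the best chain departing within $[t,t')$'' precise in exactly this excess sense. Each $B(m,s)$ unrolls further, by the same recursion at the destination markets, into a maximal chain of dual-netted services.

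Next I apply this identity to the naive continuation. I decompose
\[
  \widehat{W}(\delta, t{+}tt) - \widehat{W}(o,t) = g(\ell,t) - \bigl[\widehat{W}(o,t) - \widehat{W}(o,t{+}tt)\bigr],
\]
where $g$ is the same-time gradient \eqref{eq:gradient}. By the first step, the bracket is the (excess) value of the best chain departing the origin during the busy window $[t, t{+}tt)$. That is precisely the opportunity value of the truck being occupied.

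Finally I argue that $\hat{\lambda}$ already charges this same quantity. The dual $\lambda_t$ on the joint constraint is the shadow value of the slot. Under the lane--hour interpretation of \S\ref{sec:surrogates} it is the expected rent forgone by committing a truck at $(o,t)$, which is the busy-window chain value at the origin. The naive score therefore charges this busy-time value twice, once through $\hat\lambda$ and once through the bracket. The gradient rule charges only $\hat\lambda$ and leaves $g$ to price relocation alone, since both of its terms are evaluated at the common time $t{+}tt$, so no temporal difference enters.

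The main obstacle is this final identification, because the recursion itself never mentions $\hat\lambda$ as a busy-time price. I would state it at the level of the Lagrangian subproblem. With optimal duals, complementary slackness (as in Proposition~\ref{prop:exact}) makes $\lambda_t$ equal the marginal per-resource value of being free over the service window. I would present the ``second time'' claim as an identity of accounting terms rather than an exact numerical equality for arbitrary surrogates.
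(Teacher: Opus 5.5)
Your first two steps are the paper's proof. Unrolling the wait branch of \eqref{eq:wrec} gives monotonicity and the window reading of $\widehat{W}(o,t)-\widehat{W}(o,t')$. Adding and subtracting $\widehat{W}(o,t{+}tt)$ then splits the naive continuation into $g$ minus that drop. Your formula $\bigl(\max_{s\in[t,t')}B(m,s)-\widehat{W}(m,t')\bigr)_+$ is the precise content of what the paper calls, loosely, ``the value of the best chain'': it is the excess over waiting until $t'$. That refinement is worth keeping.

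The step that would fail is the complementary-slackness identification you plan for the last step. At optimal duals, CS does not make $\lambda_t$ equal to the per-resource value of being free over the service window. Two checks show this.
\begin{itemize}
\item Take two idle trucks competing for one load of reward $r>0$, with zero terminal values. Then $L(\lambda)=\lambda+2(r-\lambda)_+$ has unique minimizer $\lambda=r$, yet the dual-netted drop $\widehat{W}(o,t)-\widehat{W}(o,t')$ is $0$.
\item In the subcritical fluid, Lemma~\ref{lem:flat} makes $w_o(t)-w_o(t+\tau)$ vanish. Meanwhile a strictly profitable served lane carries rent $\bar r_\ell-\theta_{\ell h}>0$.
\end{itemize}
Likewise, Lemma~\ref{lem:stability}(b) identifies $\hat\lambda$ with the load's own cell rent $\mathbb{E}[(r-\theta_{\ell h})_+]$. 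It is not the chain value forgone at the origin during the window.

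Worse, Lemma~\ref{lem:flat}(i) states the fluid-optimal rule, read off from the exact-dual condition (D1), in exactly the naive form $\bar r_\ell-\lambda_\ell(t)+w_{\delta_\ell}(t+\tau_\ell)-w_{o_\ell}(t)\ge 0$. So an argument through exact CS certifies the naive rule rather than exposing a double charge.

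The paper does not attempt this identification. Its Step~3 is a one-line accounting assertion that the duals already price busy time on the contested stream. Remark~\ref{rem:honest} concedes that the gradient-versus-naive separation is a finite-$K$ property of the fitted $\widehat{W}$, not of exact duals. Abandon the CS route and keep the last step at the accounting level you offer as a fallback; that is the full strength of the paper's own argument.
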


\begin{proof}
Define two quantities at the decision $(\ell, t)$ with
$t' = t + tt(\ell)$:
\[
  D_{\mathrm{time}} \;=\; \widehat{W}(o, t) - \widehat{W}(o, t'),
  \qquad
  D_{\mathrm{place}} \;=\; \widehat{W}(\delta, t') - \widehat{W}(o, t').
\]

\emph{Step 1.} $D_{\mathrm{time}} \geq 0$, and it equals the value of
the best dual-netted chain departing $o$ within $[t, t')$. The wait
branch of recursion~\eqref{eq:wrec} makes $\widehat{W}(m, \cdot)$
non-increasing; unrolling the max branches over the window expresses
the drop as exactly the best chain the truck gives up by being busy.

\emph{Step 2.} The naive continuation splits into the two parts:
\[
  \widehat{W}(\delta, t') - \widehat{W}(o, t)
  \;=\; D_{\mathrm{place}} - D_{\mathrm{time}}.
\]
This is a one-line check: add and subtract $\widehat{W}(o, t')$.

\emph{Step 3.} The duals already price the busy time on the contested
stream, so a rule that uses the naive continuation charges
$D_{\mathrm{time}}$ a second time on top of $\hat{\lambda}$. The
same-time gradient $g = D_{\mathrm{place}}$ charges the relocation
only.
\end{proof}

Empirically the naive rule collapses to 14--17\% of rollout profit; the
gradient rule attains 90--95\% (\S\ref{sec:experiments}). \emph{Duals
price time; gradients price place.} Lemma~\ref{lem:flat} upgrades this
from a property of our construction to fluid optimality.

\section{Fluid Consistency in the Subcritical Regime}
\label{sec:fluid}

\subsection{Fluid model and its LP}

Scale fleet and demand together: $K$ resources, lane-$\ell$ arrivals a
Poisson process of rate $K \Lambda_{\ell}(t)$ with mean reward
$\bar{r}_{\ell}$ and occupation time $\tau_{\ell}$ (travel plus any
inserted rests, so a resource completing service is idle with renewed
clocks), lane geometry fixed.
Normalize per resource. The fluid control problem routes resource mass
over the market--time network: dispatch rates
$u_{\ell}(t) \in [0, \Lambda_{\ell}(t)]$ with $u_{\ell} \equiv 0$ on
$t < 0$, idle mass $z_m(t) \geq 0$ with initial condition
$z_m(0) = z^0_m$, $\sum_m z^0_m = 1$ (the benchmark's proportional
placement), and flow balance
\begin{equation}
  \dot{z}_m(t) = \sum_{\ell:\, \delta_{\ell} = m} u_{\ell}(t - \tau_{\ell})
  - \sum_{\ell:\, o_{\ell} = m} u_{\ell}(t), \qquad
  \max \int_0^T \sum_{\ell} \bar{r}_{\ell} u_{\ell}(t)\, dt
  + \omega \sum_m V(m)\, \tilde{z}_m(T),
  \label{eq:fluid}
\end{equation}
where $\tilde{z}_m(T) = z_m(T) + \sum_{\ell:\, \delta_{\ell} = m}
\int_{T - \tau_{\ell}}^{T} u_{\ell}(t)\, dt$ credits in-transit mass at
its destination, matching the model's terminal value
$\Phi = \omega \sum_k V(m_k(T))$ (a served resource's market is its
destination). For dispatch arcs landing after the horizon, extend the
potentials by $w_m(s) := \omega V(m)$ for $s > T$; at the horizon the
terminal dual condition is $w_m(T) \geq \omega V(m)$, with equality
where $z_m(T) > 0$ --- so under (A1) below, equality holds everywhere.
This display is the continuous fluid limit, in the tradition of
revenue management \citep{gallego1997network}; all quantitative
statements below are made on its \emph{hourly} finite LP (cells
$(\ell, h)$, the fitting granularity), whose optimal basis is
well-defined, and \emph{basis switches} mean changes of that optimal
basis across hour boundaries. Write $V_f$ for the value and
$(\lambda_{\ell}(t), w_m(t))$ for optimal duals on the arrival-capacity
and flow-balance constraints. Dual feasibility on dispatch and waiting
arcs reads
\begin{align}
  \text{(D1)}\quad & w_{o_{\ell}}(t) \geq \bar{r}_{\ell}
  - \lambda_{\ell}(t) + w_{\delta_{\ell}}(t + \tau_{\ell}),
  \quad \text{with equality on used arcs}, \nonumber \\
  \text{(D2)}\quad & w_m(t) \geq w_m(t^{+}),
  \quad \text{with equality where } z_m(t) > 0. \nonumber
\end{align}

\begin{assumption}
\label{ass:regime}
(A1) Subcriticality: the fluid optimum keeps $z_m(t) \geq z_{\min} > 0$
for all $m, t$. (A2) Clock and window slack, as a primitive condition
on model data: occupation times $\tau_{\ell}$ are rest-inclusive, so a
resource completing service is idle with renewed clocks, and an idle
in-market resource is service-feasible for any in-market arrival
(appointment windows and pickup reach are non-binding for idle local
resources). (A3) Nondegeneracy: the hourly fluid LP has a unique,
nondegenerate optimal basis on each hour cell. (A4) Bounded rewards
and occupation times; $\Lambda_{\ell}(\cdot)$ and
$\bar{r}_{\ell}(\cdot)$ constant on hourly cells (the benchmark's
periodic hourly schedule), so all basis switches occur at hour
boundaries and no cell straddles one; and rewards within a
lane--hour cell are homogeneous at scale ($r \equiv \bar{r}_{\ell}(h)$
up to $o(1)$ dispersion). Remark~\ref{rem:heterogeneous} discusses
heterogeneous rewards.
\end{assumption}

\begin{lemma}[Idle potentials are pinned; the gradient rule is fluid CS]
\label{lem:flat}
(i) Under (A1), (D2) holds with equality for all $m, t$, and combined
with the terminal condition this pins the potentials globally:
$w_m(t) \equiv \omega V(m)$ on $[0, T]$. Consequently the
fluid-optimal acceptance rule ---
serve lane $\ell$ at $t$ iff
$\bar{r}_{\ell} - \lambda_{\ell}(t) + w_{\delta_{\ell}}(t + \tau_{\ell})
- w_{o_{\ell}}(t) \geq 0$ --- coincides with the same-time
spatial-gradient rule
\begin{equation}
  \bar{r}_{\ell} - \lambda_{\ell}(t)
  + \bigl[ w_{\delta_{\ell}}(t + \tau_{\ell})
  - w_{o_{\ell}}(t + \tau_{\ell}) \bigr] \geq 0.
  \label{eq:fluidrule}
\end{equation}
(ii) Locally, without (A1): if $z_o > 0$ on $[t, t + \tau]$ then
$w_o(t) = w_o(t + \tau)$, and the naive-continuation penalty
$w_o(t) - w_o(t + \tau)$ is exactly the binding-capacity correction:
zero where idle mass persists, strictly positive where capacity binds.
\end{lemma}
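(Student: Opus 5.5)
The plan is to argue entirely through LP complementary slackness on the hourly fluid LP, using (D1), (D2) and the terminal condition as given.

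\emph{Part (i).} First I would check the sign structure of (D2). The waiting arc from $(m,t)$ to $(m,t^{+})$ carries idle mass $z_m(t)$. Its primal variable is nonnegative with zero objective coefficient, so its reduced cost is $w_m(t) - w_m(t^{+}) \geq 0$. Complementary slackness then forces this reduced cost to zero whenever $z_m(t) > 0$.

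Under (A1), $z_m(t) \geq z_{\min} > 0$ for every $(m,t)$, so equality holds on every waiting arc of every hour cell. This makes $w_m(\cdot)$ constant on $[0,T]$ for each $m$; in the discrete LP this is a finite telescoping over hour boundaries.

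Next I would fix the constant. Since $z_m(T) \geq z_{\min} > 0$, the terminal dual condition holds with equality, so $w_m(T) = \omega V(m)$. Hence $w_m(t) \equiv \omega V(m)$ on $[0,T]$. By the stated extension $w_m(s) := \omega V(m)$ for $s > T$, this identity also covers arcs that land after the horizon.

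The equivalence of the two rules is then a one-line substitution. Because $w_{o_\ell}(t) = w_{o_\ell}(t+\tau_\ell) = \omega V(o_\ell)$, the term $-w_{o_\ell}(t)$ in the fluid rule can be replaced by $-w_{o_\ell}(t+\tau_\ell)$, which gives \eqref{eq:fluidrule}.

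It remains to justify calling the first rule "fluid-optimal". Here I would invoke (D1) with equality on used arcs, together with the capacity-constraint CS: $\lambda_\ell(t) > 0$ implies $u_\ell = \Lambda_\ell$. Reading the reduced cost of the dispatch arc as the acceptance score, we get score $\geq 0$ iff the arc can carry positive flow in the optimum. Nondegeneracy (A3) removes the ambiguity at score exactly zero, up to ties on a measure-zero set.

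\emph{Part (ii).} The same waiting-arc CS argument applies locally. If $z_o > 0$ on $[t, t+\tau]$, every waiting arc in that window is tight, and telescoping gives $w_o(t) = w_o(t+\tau)$. The naive continuation $w_\delta(t+\tau) - w_o(t)$ then equals the same-time gradient minus $D := w_o(t) - w_o(t+\tau)$, which is the decomposition from Lemma~\ref{lem:doublecount} in fluid form. So $D = 0$ wherever idle mass persists.

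For strict positivity where capacity binds, I would argue that if $z_o(s) = 0$ on a subinterval of the window, the waiting arc there has slack in CS. Nondegeneracy (A3) then forces the dual inequality to be strict on at least one such arc. Indeed, if every waiting arc at $o$ in the window were tight while its primal variable is zero, the basis would be degenerate or the optimum non-unique. Since (D2) makes $w_o$ non-increasing, a strict drop on any arc gives $D > 0$.

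\emph{Main obstacle.} The hard step is making this strictness rigorous. It requires identifying, in the hourly LP, which waiting arcs at $o$ are nonbasic at zero. It then requires showing that a zero reduced cost on all of them contradicts uniqueness, by pivoting idle mass to construct an alternative optimal solution.

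A smaller technical point is whether the continuous statement "for all $t$" should be read cellwise. I would state the result on the hourly grid and note that (A4) keeps the duals piecewise constant, so it extends to all $t$.
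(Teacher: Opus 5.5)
Your proposal is correct. Part (i) and the equality half of (ii) follow the paper's proof step for step: complementary slackness on waiting arcs under (A1) makes (D2) tight everywhere, telescoping gives constancy, terminal complementary slackness pins the constant at $\omega V(m)$, and substituting $w_{o_\ell}(t+\tau_\ell)$ for $w_{o_\ell}(t)$ gives \eqref{eq:fluidrule}.

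You differ by adding rigor on two points the paper passes over. First, the paper simply ``takes the fluid-optimal rule from (D1)''; your reduced-cost reading justifies it. Note that (D1) makes the fluid score always $\leq 0$, so served cells sit at score exactly $0$ and the rule reads ``serve iff the dispatch arc's reduced cost vanishes''. Second, and more substantively, for strict positivity where capacity binds the paper's proof only says that (D2) \emph{can} be strict when $z_o = 0$. That does not establish the strictness the statement asserts; your (A3)-based argument does.

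The step you flag as the main obstacle is in fact short. In a primal-nondegenerate basis every zero-valued variable is nonbasic, including every waiting arc with $z_o = 0$. If a nonbasic variable of this bounded LP had zero reduced cost, pivoting it in would give a strictly positive ratio-test step and hence a second optimal basis, contradicting uniqueness. So each such arc has $w_o(s) > w_o(s^{+})$, and monotonicity from (D2) gives $w_o(t) - w_o(t+\tau) > 0$.

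State two caveats explicitly. First, this imports (A3), which part (ii) of the statement does not list. Second, it needs (A3) read literally as a unique \emph{and} primal-nondegenerate optimal basis. The paper's appendix instead describes (A3) as mere dual uniqueness (``(A3$'$) strengthens (A3) from dual uniqueness to primal nondegeneracy''). Under that weaker reading a zero-flow waiting arc can have zero reduced cost, and the strict inequality can fail.
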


\begin{proof}
\emph{Step 1.} Complementary slackness on a waiting arc says: if idle
mass sits at $(m, t)$, the arc is used, so its dual inequality (D2) is
tight. Under (A1) idle mass is positive everywhere, so (D2) holds with
equality everywhere: $w_m(\cdot)$ cannot drop at any time, including
hour boundaries, so it is constant on $[0, T]$.

\emph{Step 2.} Terminal complementary slackness: $z_m(T) \geq
z_{\min} > 0$ makes the terminal dual condition tight,
$w_m(T) = \omega V(m)$. With Step~1, $w_m(t) \equiv \omega V(m)$.

\emph{Step 3.} Take the fluid-optimal rule from (D1),
$\bar{r}_{\ell} - \lambda_{\ell}(t) + w_{\delta_{\ell}}(t+\tau_{\ell})
- w_{o_{\ell}}(t) \geq 0$, and replace $w_{o_{\ell}}(t)$ by
$w_{o_{\ell}}(t + \tau_{\ell})$ using Step~1. This gives
rule~\eqref{eq:fluidrule} exactly.

\emph{Step 4.} For (ii): where capacity binds ($z_o = 0$) Step~1
fails, (D2) can be strict, and the replaced term
$w_o(t) - w_o(t+\tau)$ becomes a positive correction. That correction
is what the naive rule charges even when capacity is slack.
\end{proof}

\begin{remark}[What the subcritical limit does and does not certify]
\label{rem:honest}
Under (A1) the potentials are constants pinned to terminal values, so
in the fluid \emph{limit} the naive continuation $w_o(t) - w_o(t+\tau)$
vanishes and the $\varepsilon$-margin naive rule satisfies the same
guarantee as the gradient rule: Theorem~\ref{thm:fluid} certifies that
the gradient rule is fluid-optimal, not that it dominates the naive
rule in the limit. The gradient-versus-naive separation --- the
empirical collapse of Lemma~\ref{lem:doublecount} --- is a
finite-$K$, contested-regime phenomenon, where fitted chain rents make
the time profile of $\widehat{W}$ informative and the naive rule
double-charges it. Likewise, in the limit $\widehat{W}$ carries no
information beyond the terminal values; its value at finite $K$ on the
benchmark's tight and scarce scenarios --- which violate (A1) --- is an
empirical matter (\S\ref{sec:experiments}).
\end{remark}

\subsection{Price consistency and the theorem}

\begin{lemma}[Dual stability]
\label{lem:stability}
Work with the hourly fluid LP (cells $(\ell, h)$; the fitting
granularity of $\hat{\lambda}$ and $\widehat{W}$). Under (A1)--(A4), on
a sample path at scale $K$: (a) the realized arc-flow LP's node
potentials $\hat{w}_m(h)$ equal the fluid potentials $w_m(h)$ with
probability $1 - o(1)$, and its cell thresholds converge at
$O_p(K^{-1/2})$; (b) the bucket-averaged load duals satisfy
$\hat{\lambda}(\ell, h) \to \lambda_{\ell}(h)
:= \mathbb{E}[(r - \theta_{\ell h})_+]$ at $O_p(K^{-1/2})$, where
$\theta_{\ell h} = w_{o_{\ell}}(h) - w_{\delta_{\ell}}(h + \tau_{\ell})$
is the fluid acceptance threshold; (c) the chain-packing duals of
Lemma~\ref{lem:dw} coincide with the arc-flow duals on an event of
probability $1 - o(1)$.
\end{lemma}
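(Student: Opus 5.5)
The plan is to reduce all three parts to basis stability of the hourly fluid LP under random perturbation of its right-hand side. On a sample path at scale $K$, the realized arc-flow LP on the $(\ell,h)$ cells has the same constraint matrix as the hourly fluid LP. Only the arrival-capacity right-hand sides change: the fluid quantities $\Lambda_\ell(h)$ are replaced by the normalized realized counts $N_{\ell h}/K$. These counts are Poisson with mean $K\Lambda_\ell(h)$, so $N_{\ell h}/K - \Lambda_\ell(h) = O_p(K^{-1/2})$ uniformly over the finitely many cells (A4 makes the cell set finite and fixed).

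By (A3) the fluid optimal basis $B$ on each cell is unique and nondegenerate, so its primal basic variables are strictly positive. Primal feasibility of $B$ is therefore preserved under any right-hand-side perturbation of size below some $\epsilon_0>0$. Dual feasibility of $B$ does not involve the right-hand side at all. Hence $B$ remains optimal for the realized LP on the event $E_K=\{\max_{\ell,h}|N_{\ell h}/K-\Lambda_\ell(h)|<\epsilon_0\}$, which has probability $1-o(1)$ by Chebyshev and a union bound.

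Part (a) follows because the duals of a basis are $c_B^\top B^{-1}$, which is independent of the right-hand side. On $E_K$ this gives $\hat w_m(h)=w_m(h)$ exactly. The cell thresholds then follow in two ways. The thresholds $\theta_{\ell h}$ are functions of the potentials, so they are exact on $E_K$. The primal quantities (which cells are fractionally served and at what rate) are affine in the right-hand side with a fixed $B^{-1}$, so they converge at $O_p(K^{-1/2})$. Here I will use (A1) and Lemma~\ref{lem:flat}: the pinned potentials $w_m\equiv\omega V(m)$ make the identification concrete, and they keep the idle-mass constraints strictly slack on $E_K$ since $z_{\min}-O(\epsilon_0)>0$.

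For (b), I will view the realized LP with individual loads as columns. Each load $i$ in cell $(\ell,h)$ has an upper bound $u_i\le 1$, and its dual is $\lambda_i=(r_i-\theta_{\ell h})_+$. This is complementary slackness: a load is accepted iff its reduced reward $r_i-\hat w_{o}(h)+\hat w_\delta(h+\tau_\ell)$ is nonnegative, and its capacity dual equals that positive part. On $E_K$ the thresholds equal $\theta_{\ell h}$, so the bucket average $\hat\lambda(\ell,h)=N_{\ell h}^{-1}\sum_i (r_i-\theta_{\ell h})_+$ is an empirical mean of i.i.d.\ bounded terms (A4). The CLT then gives the $O_p(K^{-1/2})$ rate to $\mathbb{E}[(r-\theta_{\ell h})_+]$. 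Off $E_K$ the error is bounded, which costs only $o(1)$ probability. The main obstacle is here. The fluid LP indexes capacity duals by cell and not by load, so I must show that the cell-level dual $\lambda_\ell(h)$ of the hourly LP, with (A4) homogeneity up to $o(1)$ dispersion, is consistent with this per-load average. I would first handle the exactly homogeneous case, where both sides equal $(\bar r_\ell-\theta_{\ell h})_+$. I would then absorb the $o(1)$ dispersion by Lipschitz continuity of $(\cdot)_+$.

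For (c), I will invoke Lemma~\ref{lem:dw}, taken as the Dantzig--Wolfe equivalence between chain packing and arc flow. Chain decomposition of an arc-flow basic solution gives a chain-packing LP with equal optimal value. By LP duality, any optimal chain-packing dual then gives an arc-flow dual and conversely. On $E_K$ the arc-flow dual is unique by nondegeneracy, so the chain-packing duals must coincide with it. This needs the chain master's dual to be unique on the same event. I would obtain that uniqueness by transferring nondegeneracy through the decomposition, and if that fails, by restricting to the event that the realized chain optimum is a unique extreme point.
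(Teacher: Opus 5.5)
\textbf{Main gap: part (c) never uses (A2), and (c) fails without it.} Your (a) and (b) follow essentially the paper's route: basis stability of the hourly LP, then per-load complementary slackness $\lambda_i=(r_i-\theta_{\ell h})_+$, then a CLT over the bucket. Part (c) is where the argument breaks.

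Lemma~\ref{lem:dw} only identifies $\min_\lambda L$ with the chain-packing LP. It says nothing about the arc-flow LP. Decomposing an arc-flow solution yields paths in the market--time network, and a path is a column of the chain master only if it is a clock- and window-feasible chain. When HOS clocks or service budgets bind (as in $K_1$), $\mathrm{conv}(\mathcal{C}_k)$ can be strictly smaller than the unit-flow polytope from $k$'s start. The arc-flow LP is then a strictly looser relaxation, and the two dual sets need not agree.

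The paper closes this with (A2). Occupation times are rest-inclusive, so a resource finishing any dispatch is idle, renewed, and feasible for any in-market follow-on. Hence every path is a feasible chain. With that in hand, your uniqueness worry disappears. For every $\lambda$, each resource's best chain value equals its best-path value in the acyclic network. So the two relaxations have the same dual function and the same minimizers, and uniqueness on the arc-flow side transfers automatically. Your fallback would not work anyway: a unique primal extreme point does not give a unique dual (that needs primal nondegeneracy), and you give no reason that event has probability $1-o(1)$.

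\textbf{Secondary issue: the objective is perturbed, not only the right-hand side.} Realized rewards are random, so the cell LP's objective (mean realized rewards) moves too. The paper perturbs $(b,c)$ jointly and keeps the basis via strict reduced-cost signs. More importantly, in (b) you carry ``thresholds equal $\theta_{\ell h}$ on $E_K$'' over to the load-level LP, whose objective coefficients are the individual $r_i$. Right-hand-side independence of $c_B^{\top}B^{-1}$ says nothing about that LP.

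The operative argument should be the one you mention only in passing. On the stability event the realized optimum keeps idle mass strictly positive. Complementary slackness then makes every waiting-arc inequality tight, so each $\hat w_m(h)$ telescopes to the deterministic terminal coefficient $\omega V(m)$, whatever the dispatch-arc costs are. This is how the paper gets exact potentials.

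\textbf{Minor: the ``main obstacle'' in (b) is unnecessary.} Since $\lambda_\ell(h)$ is defined as $\mathbb{E}[(r-\theta_{\ell h})_+]$, applying the CLT directly to the i.i.d.\ terms $(r_i-\theta_{\ell h})_+$ gives the $O_p(K^{-1/2})$ rate. Your detour through the homogeneous case plus Lipschitz absorption of $o(1)$ dispersion would only yield $O_p(K^{-1/2})+o(1)$.
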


\begin{proof}
\emph{(Step 1: cell LP and basis stability.)} Aggregate the realized
instance into cells: $N_{\ell h}$ loads arrive in cell $(\ell, h)$ with
i.i.d.\ rewards; the arc-flow relaxation over the market--hour network
is a finite LP with data $b_K = N/K$ (arrival masses) and $c_K$ (mean
rewards). By Poisson concentration and the CLT over finitely many cells,
$b_K = b + O_p(K^{-1/2})$ and $c_K = c + O_p(K^{-1/2})$. Let $B^{\star}$
be the fluid LP's optimal basis, unique and nondegenerate by (A3). Its
optimality conditions --- reduced-cost signs and basic-variable
positivity \citep[e.g.,][]{bertsimas1997introduction} --- hold with
strict inequality, hence remain valid on an open neighborhood of
$(b, c)$. On the event
$E_K = \{(b_K, c_K) \text{ in that neighborhood}\}$ --- probability
$1 - o(1)$ --- the realized LP has the same optimal basis. Dual
solutions are functions of the basis and objective only,
$y = c_B (B^{\star})^{-1}$. For the node potentials the equality is
exact, not merely approximate: under (A1) the waiting arcs are basic
everywhere, so each $\hat{w}_m(h)$ telescopes along waiting arcs to
the terminal coefficient $\omega V(m)$ --- a deterministic model
constant that the random reward components of $c_B$ never touch. Hence
$\hat{w}_m(h) = w_m(h) = \omega V(m)$ exactly on $E_K$, and the cell
thresholds $\hat{\theta}_{\ell h}$, which inherit randomness only
through the potentials, converge at the stated $O_p(K^{-1/2})$ rate
from the objective perturbation of the dispatch arcs. This proves (a).

\emph{(Step 2: disaggregation.)} In the load-level LP each load $i$ in
cell $(\ell, h)$ has its own capacity constraint with dual $\lambda_i$.
With the basis fixed, complementary slackness gives
$\lambda_i = (r_i - \hat{\theta}_{\ell h})_+$: under (A1) idle capacity
is ample within every cell, so the load-level optimum serves exactly
the above-threshold loads --- those earn their surplus as rent; loads
below have slack capacity and zero dual. Averaging over the
$N_{\ell h} = \Theta(K)$ i.i.d.\ rewards,
$\hat{\lambda}(\ell, h) \to \mathbb{E}[(r - \theta_{\ell h})_+]$ by the
law of large numbers at rate $O_p(K^{-1/2})$, using
$\hat{\theta} \to \theta$ from (a). This proves (b).

\emph{(Step 3: chains vs flows.)}
$\mathrm{conv}(\text{chains}_k) \subseteq \{\text{unit flows from } k
\text{'s start}\}$, with equality of optimal values whenever every
unit flow decomposes into clock-feasible chains. Under (A2) this holds
by construction: occupation times are rest-inclusive, so a resource
finishing any dispatch is idle with renewed clocks and, being in the
destination market, is feasible for any subsequent in-market dispatch.
Every path in the market--time network is therefore itself a feasible
chain, the two polyhedra have equal optimal values, and both LPs share
optimal duals. This proves (c).
\end{proof}

\begin{remark}[Heterogeneous rewards]
\label{rem:heterogeneous}
Under within-cell homogeneity, the mean rent equals the classical
bid-price form: $\mathbb{E}[(r - \theta)_+] = (\bar{r} - \theta)_+$,
and subtracting it in rule~\eqref{eq:policy} reproduces the fluid rule
exactly. With genuinely heterogeneous within-cell rewards, the fluid
optimum is a \emph{threshold} rule (accept iff $r \geq \theta$),
while the mean rent is strictly positive on contested cells; a policy
that subtracts the mean rent over-shades and accepts conservatively.
The consistent extension is to fit quantile (marginal) prices rather
than mean rents. We leave that refinement to future work and note two
things: the certificate of Theorem~\ref{thm:cert} is unaffected
(it holds for any policy), and any bias the conservative shading
induces at finite $K$ is visible in --- and bounded by --- the reported
certified gaps; we do not attempt to decompose it here.
\end{remark}

\begin{remark}[Portability, explained]
\label{rem:portability}
Step~1 says the price surface is a \emph{basis object}: two independent
sample paths that land in the stability neighborhood --- probability
$1 - o(1)$ --- share the same optimal basis, hence identical potentials
and thresholds within $O_p(K^{-1/2})$, hence identical policy decisions
by the threshold separation of (A3$'$). The empirical
portability finding (identical policy decisions under frozen vs fresh
duals) is LP basis stability, not an accident of the benchmark; and
portability should fail near fluid basis switches (arrival-regime
changes), a testable prediction.
\end{remark}

\begin{theorem}[Subcritical fluid consistency]
\label{thm:fluid}
Assume (A1)--(A4), the threshold separation (A3$'$,
Assumption~\ref{ass:sep}), and $V_f > 0$. Let the dual-price policy
$\pol$ be built from $\hat{\lambda}, \widehat{W}$ fitted \emph{at
lane--hour granularity} from an optimal dual solution of an
independent sample path's relaxation
(Remark~\ref{rem:granularity} discusses the deployed market--hour
table), and run with the soft acceptance margin of
Appendix~\ref{sec:margin} (accept iff
$\mathrm{score} \geq -\varepsilon_K$, for any sequence
$\varepsilon_K$ between the fit-plus-dispersion error and the
threshold-separation constant; the margin is an existence device, cf.\
the remark there). Writing $V^{\pol}(K)$ for the expected total value
over the product of the independent training and evaluation paths at
scale $K$, and $\Vstar(K) := \mathbb{E}[\Vstar(\xi)]$ over the
evaluation path,
\begin{equation}
  V^{\pol}(K) \geq V_f \cdot K - o(K)
  \quad\text{and}\quad
  \Vstar(K) \leq V_f \cdot K + o(K),
  \qquad\text{hence}\quad
  \frac{V^{\pol}(K)}{\Vstar(K)} \to 1.
\end{equation}
\end{theorem}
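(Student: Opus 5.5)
The proof splits into an upper bound and a lower bound, with the ratio statement following because $V_f>0$.

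\emph{Upper bound.} I would show $\Vstar(K) \leq V_f K + o(K)$ by the standard fluid-relaxation argument \citep{gallego1997network}. Take the expectation of any sample-path-optimal plan on $\xi$. Its expected dispatch rates, divided by $K$, satisfy the flow balance of \eqref{eq:fluid}, because every served load relocates a truck along an arc. They also satisfy the capacity bounds $u_\ell \leq \Lambda_\ell$, because expected arrivals in a cell equal $K\Lambda_\ell$. For both facts I will use the hourly LP and (A4). These expected rates are therefore fluid-feasible, which gives $\mathbb{E}[\Vstar(\xi)] \leq K V_f$ plus a correction from within-cell reward dispersion. That dispersion term is $o(K)$ under the homogeneity clause of (A4); the bounded-reward part of (A4) controls the boundary effects.

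An alternative route is to use weak duality from Theorem~\ref{thm:cert} with the fluid duals plugged in, so that $\mathbb{E}[L(\lambda;\xi)] = KV_f + O(\sqrt K)$. In that version, Lemma~\ref{lem:stability}(c) identifies the per-resource chain values with the flow potentials under (A2).

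\emph{Lower bound.} The approach is to show that $\pol$ makes the fluid-optimal decision on all but $o(K)$ loads. I would proceed in four steps.
\begin{enumerate}
\item I condition on the training path lying in the event $E_K$ of Lemma~\ref{lem:stability}. On this event, part (a) gives $\widehat{W}$ potentials equal to $\omega V(m)$. Part (b) gives $\hat\lambda(\ell,h) = \lambda_\ell(h) + O_p(K^{-1/2})$, which by Remark~\ref{rem:heterogeneous} equals $(\bar r_\ell - \theta_{\ell h})_+$ up to $o(1)$.
\item By Lemma~\ref{lem:flat}(i), the score \eqref{eq:policy} then differs from the fluid reduced cost $\bar r_\ell - \lambda_\ell + w_{\delta}-w_o$ by at most the fit error plus the dispersion error.
\item The threshold separation (A3$'$) bounds the fluid reduced costs of unused arcs away from zero. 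The margin $\varepsilon_K$ sits between the error and this separation constant. Hence, on cells the fluid serves, the score is at least $-\varepsilon_K$, so the policy accepts. On cells with strictly negative reduced cost, the policy rejects.
\item Feasibility is handled by (A2): an idle in-market truck is feasible for any in-market load, so whether an acceptance is executable depends only on whether an idle truck is at the origin.
\end{enumerate}

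\emph{Main obstacle.} The hardest step is showing that idle trucks are actually present when accepted loads arrive. The plan is to track the stochastic idle counts $Z_m(t)/K$ under $\pol$. The policy's dispatch rate on each cell is $\Lambda_\ell$ on accepted cells, where the fluid solution also serves all arrivals, since rent is positive only where capacity at the arrival binds. On rejected cells the rate is $0$. This matches the fluid $u_\ell$ up to the rare misclassified loads, of which there are $O_p(\sqrt K)$ or $o(K)$ on $E_K^c$.

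A functional law of large numbers for the Poisson-driven flow balance over finitely many hourly cells then gives $\sup_t |Z_m(t)/K - z_m(t)| \to 0$ in probability. By (A1), $z_m \geq z_{\min}>0$, so with probability $1-o(1)$ no origin market ever runs out. Every acceptance is executed, and the realized reward is $K\int \sum_\ell \bar r_\ell u_\ell + \omega\sum_m V(m)\tilde z_m(T)\cdot K - o(K)$.

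One subtlety is fluid cells that are only partially served, where $u_\ell<\Lambda_\ell$. I would argue that nondegeneracy (A3) together with (A1) rules these out. Non-binding idle capacity means the capacity dual equals the rent, so a served arc is fully served. If that argument fails, I would instead rely on the $\varepsilon_K$ margin and the threshold rule to serve exactly the above-threshold fraction.

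\emph{Assembly.} Expected value over the product of the training and evaluation paths is at least $(1-o(1))(V_fK - o(K))$ minus the losses on the complementary events. Those losses are $o(1)\cdot O(K)$ by the bounded rewards in (A4). This gives $V^{\pol}(K) \geq V_fK-o(K)$. Combining with the upper bound and dividing by $V_fK>0$ yields $V^{\pol}(K)/\Vstar(K)\to 1$.
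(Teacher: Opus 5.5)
\textbf{Main gap: you never control $\widehat{W}$.} Your lower-bound skeleton matches the paper's: stable tables, score versus fluid reduced cost via Lemma~\ref{lem:flat}, cell-by-cell agreement from the margin and (A3$'$), idle pools never emptying under (A1), and a value ledger. The problem is in your step~1. Lemma~\ref{lem:stability}(a) is about the node potentials $\hat w_m(h)$ of the realized arc-flow LP. It says nothing about $\widehat{W}$, the table the policy actually reads, which is the one-truck backward recursion \eqref{eq:wrec} over the training stream with dual-netted rewards. These are different objects. Moreover, $\widehat{W}$ is not exactly $\omega V(m)$, because realized rewards and fitted per-load duals enter the recursion. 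Without a bound on $\widehat{W}$ you have no control of $g(\ell,h)+\theta_{\ell h}$, so your step~2 is unsupported for the gradient term.

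The paper closes this with a separate lemma (Lemma~\ref{lem:wcons}), which runs as follows.
On the stability event the training duals are $\lambda_i=(r_i-\hat\theta_{\ell h})_+$. Hence every dual-netted reward is $\min(r_i,\hat\theta_{\ell h})\le\theta_{\ell h}+\delta_K$, where $\theta_{\ell h}=\omega[V(o_\ell)-V(\delta_\ell)]$ by the pinning. A dispatch branch therefore lifts $\widehat{W}(o,t)$ at most $\delta_K$ above $\omega V(o)$ plus the downstream error. The wait branch keeps it from falling below $\omega V(o)$ minus that error. Backward induction from $\widehat{W}(\cdot,T)=\omega V(\cdot)$ then gives $|\widehat{W}-\omega V|=O(T\delta_K)$, hence $|g+\theta_{\ell h}|=O(\delta_K)$, which the margin absorbs. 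You need this argument or an equivalent one.

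\textbf{Three smaller repairs.}
\begin{itemize}
\item \emph{Circular tracking step.} Your functional LLN step is circular as written: dispatches equal thinned arrivals only while every origin pool is nonempty, yet you use that equality to prove the pools stay nonempty. Apply the concentration to the process in which every accepted arrival is dispatched. Then show, by a first-violation (stopping-time) argument, that the true process coincides with it until some pool drops below $z_{\min}K/2$; the uniform deviation bound and (A1) push that time past $T$. This is Lemma~\ref{lem:track}.
\item \emph{Full service of served cells.} Your argument is not right as stated. Complementary slackness forces saturation only where the rent is positive; a used arc with $\bar r_\ell=\theta_{\ell h}$ has zero rent and is not forced to be saturated. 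Your fallback of serving an ``above-threshold fraction'' is empty under within-cell homogeneity. Instead, cite the second clause of (A3$'$), which the theorem assumes for exactly this purpose.
\item \emph{Evaluation-path bad event.} The loss there is not $o(1)\cdot O(K)$ by bounded rewards alone, because the arrival count is unbounded and correlated with that event. Use Cauchy--Schwarz with the Poisson second moment, as in Proposition~\ref{prop:ledger}.
\end{itemize}

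\textbf{Your upper bound is correct but takes a different route.} The paper argues pathwise. $\Vstar(\xi)$ is at most the hourly LP with realized arrival counts; Bernstein puts that within $o(K)$ of $V_fK$ with probability $1-O(K^{-2})$; Cauchy--Schwarz handles the tail. Your expected-plan argument amounts to Jensen's inequality for the LP value, which is concave in the arrival masses. It needs no concentration, but it yields only the expectation bound. That is all the theorem asks for, whereas the paper also gets a high-probability pathwise bound.

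Your weak-duality variant is also sound: with per-load duals $(r_i-\theta_{\ell h})_+$, each truck's subproblem telescopes to $\omega V(m_k(0))$ under the pinned potentials. It in fact proves more, namely $\mathbb{E}[\min_\lambda L]\le V_fK+o(K)$. So certificates are asymptotically tight under (A1), in contrast with the critical-regime slack of Theorem~\ref{thm:kernel}.
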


\begin{proof}
We prove the two inequalities separately. Steps 1--3 are proved here;
Step~4, the fleet-coupling estimate, is proved in full in
Appendix~\ref{sec:coupling}, where the soft margin is also defined.

\emph{Step 1 (upper bound on $\Vstar$).} The realized instance is the
fluid LP with perturbed arrival counts. Poisson concentration
(Bernstein, applied per lane--hour cell and summed over the finitely
many cells) bounds the perturbation's effect on the LP value by
$o(K)$ on an event of probability $1 - O(K^{-2})$; off that event the
LP value is at most $r_{\max} N + \omega V_{\max} K$, whose
expectation against the $O(K^{-2})$ tail is $o(K)$ by
Cauchy--Schwarz. Hence $\Vstar(K) \leq V_f K + o(K)$.

\emph{Step 2 (the fitted surfaces are nearly fluid).} By
Lemma~\ref{lem:stability} and the $\widehat{W}$-consistency lemma of
the appendix (Lemma~\ref{lem:wcons}), the fitted tables are within
$\delta_K = o(1)$ of the fluid duals on every cell, on an event of
probability $1 - o(1)$; under (A4) all basis switches sit on hour
boundaries, so no cell mixes two dual regimes.

\emph{Step 3 (the policy plays the fluid rule).} By
Lemma~\ref{lem:flat}, the fluid-optimal rule is the same-time gradient
rule. Combining with Step~2 and the soft margin, the policy's
decisions agree with the fluid-optimal decisions on every cell
(Lemma~\ref{lem:agree}); the only exceptions are the
$o(1)$-probability bad events.

\emph{Step 4 (accepted arrivals find trucks:
Appendix~\ref{sec:coupling}).} Because the policy is state-blind, the
dispatch process is a thinning of the arrival stream as long as idle
pools are nonempty; a time-uniform concentration bound plus a
first-violation argument shows idle mass stays above
$z_{\min} K/2$ throughout $[0, T]$ with probability $1 - o(1)$, so no
accepted arrival is refused for lack of a resource, and realized
service rates track fluid rates to $O(\sqrt{K \log K})$
(Lemmas~\ref{lem:conc}--\ref{lem:track}).

\emph{Step 5 (sum up).} The value ledger
(Proposition~\ref{prop:ledger}) totals the losses --- counting
fluctuations, within-cell dispersion, and the bad events --- at
$o(K)$, so $V^{\pol}(K) \geq V_f K - o(K)$. Combine with Step~1 to
get the ratio.
\end{proof}

\begin{remark}
(a) The critical regime $z_{\min} = 0$ (tight/scarce economics) is
genuinely open: (D2) equality fails, the gradient rule loses its CS
status, and \S\ref{sec:limits} shows per-resource duality slack can
persist; we conjecture $V^{\pol}/\Vstar \to 1 - \varepsilon(\rho)$ with
$\varepsilon$ growing in contention. (b) Assumption (A2) is where HOS
clocks live; relaxing it requires a regeneration argument at rest events
and is this section's remaining technical debt. (c) The theorem is
stated for exactly optimal training duals. In practice the fitted
duals are ${\sim}15$ subgradient iterates; empirically the policy's
decisions are already invariant at that budget
(\S\ref{sec:experiments}), consistent with the basis-stability
mechanism of Lemma~\ref{lem:stability} --- the tables only need to
land in the same basin, not converge in norm. Extending the theorem to
value-suboptimal duals requires a sharpness modulus for the
bucket-averaged functionals that is uniform in $K$; we do not pursue
it here.
\end{remark}

\section{Limits of the Relaxation}
\label{sec:limits}

\begin{theorem}[Per-resource slack bounded away from zero]
\label{thm:kernel}
There exist instance families of the model with
$[\min_{\lambda} L(\lambda) - \Vstar] / K \geq \gamma > 0$ for all fleet
sizes $K$.
\end{theorem}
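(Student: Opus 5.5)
The plan is to follow the route the introduction announces. First build a fixed three-truck \emph{kernel} instance with a strictly positive duality gap $\Delta := \min_{\lambda} L(\lambda) - \Vstar > 0$. Then replicate it into $K/3$ non-interacting copies, so the gap adds up linearly. The result is $[\min_\lambda L - \Vstar]/K \geq \Delta/3 =: \gamma$ for every $K$ divisible by three. For other $K$, pad with idle trucks that face no feasible loads; these contribute zero to both sides, so $\gamma$ shrinks only by the constant factor $3/5$.

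\emph{Step 1 (kernel).} Take three trucks in distinct markets and a small set of loads. Use the controlled-relocation primitive to make the loads form chains: serving one load moves the truck to the origin of the next, inside that load's window. The instance should be designed so each truck's feasible chains overlap pairwise in an odd-cycle pattern. That is, every pair of trucks competes for a shared load, and any integer assignment must leave one chain broken. The key tool is the chain-packing form of the Lagrangian dual. By standard LP duality for weakly coupled DPs, with each per-resource sub-MDP deterministic given $\xi$, $\min_\lambda L(\lambda)$ equals the LP in which each truck chooses a convex combination of its feasible chains, subject to $\sum_k (\text{uses of load } t) \leq 1$. This is the chain-packing LP referenced as Lemma~\ref{lem:dw}. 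A half-integral solution puts weight $1/2$ on two chains per truck, so that each contested load is used exactly once in total, and it beats the integer optimum, as with the fractional matching of a triangle. I would certify this in exact rational arithmetic. To get $\Vstar$, enumerate all integer assignments. To get the LP value, exhibit the half-integral primal solution together with a matching dual $(\lambda_t)$ that satisfies the chain reduced-cost conditions, which proves LP optimality. Together these give a rational $\Delta > 0$.

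\emph{Step 2 (replication lemma).} Place copy $j$ of the kernel in its own disjoint set of markets, with no lanes between copies. Feasibility $f$ then never lets a truck from copy $j$ serve a load of copy $j'$. The arrival times can be interleaved arbitrarily, since the joint constraint is per load. Under these conditions:
\begin{itemize}
\item $\Vstar$ is additive across copies, because the joint problem decouples.
\item $L(\lambda)$ is additive, because it is a sum of $\lambda_t$ terms and per-resource values that each involve only that copy's loads.
\item The minimisation over $\lambda$ separates by copy, so $\min_\lambda L = \sum_j \min_{\lambda^{(j)}} L^{(j)}$.
\end{itemize}
Hence the total gap is $(K/3)\Delta$. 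The terminal value $\Phi$ is also additive over trucks, so it causes no interference.

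\emph{Main obstacle.} The hard step is the kernel construction itself: realising the odd-cycle chain structure within the model's primitives so that the fractional chains are genuinely feasible. They must respect appointment windows and HOS clocks under $S$, and the terminal values $\omega V(m)$ must not let the LP close the gap. The paper's remark that such kernels are rare (${\sim}0.03\%$ under random search) suggests the construction is delicate. I would first try the simplest candidate: three loads $\ell_{12},\ell_{23},\ell_{31}$, where each is reachable by exactly two trucks, plus a follow-on load that is profitable only when reached via a specific relocation, so that integrality forces a loss. I would tune the rewards by solving the small LP directly and keep clocks non-binding to avoid HOS subtleties. A secondary point to check is that the benchmark's bucketed discretization of $L$ is not needed here. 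The theorem concerns the exact $\min_\lambda L$, and any permissive discretization only enlarges it, by Theorem~\ref{thm:cert}, so it cannot destroy the gap.
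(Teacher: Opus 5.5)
Your architecture matches the paper's. Lemma~\ref{lem:dw} turns $\min_\lambda L$ into the chain-packing LP. A three-truck kernel is certified by a half-integral packing against an exhaustively enumerated $\Vstar$. Lemma~\ref{lem:replication} then scales it through disjoint copies. Three side remarks:
\begin{itemize}
\item You do not need a matching dual. The chain LP is a maximization, so any feasible packing already lower-bounds $\min_\lambda L$, and that is the only direction the theorem uses.
\item Your idle-truck padding slightly improves on the paper, whose replication only covers $K \in 3\mathbb{N}$ (with $\gamma = 7/6$). Padding gives $\gamma = \Delta/5$ for every $K \geq 3$.
\item Padding cannot reach $K = 1$, and no construction can. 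With one truck the dualized constraint is redundant, so $\min_\lambda L = L(0) = \Vstar$ and the slack is zero. ``For all fleet sizes $K$'' has to be read with that caveat.
\end{itemize}

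The genuine gap is that you never produce the kernel. For an existence statement that is the entire proof; replication is bookkeeping. Your sketch also needs more than reward tuning.
\begin{itemize}
\item If every chain has length one, the chain LP is the bipartite truck--load matching LP, which is totally unimodular. Loads that are merely ``reachable by two trucks'' therefore give zero gap, and all the non-integrality has to come from multi-load chains.
\item Suppose trucks start in markets $1,2,3$ and relocate only through service. If each of $\ell_{12},\ell_{23},\ell_{31}$ gets its second truck through the cycle itself, you need $t(\ell_{31}) < t(\ell_{12}) < t(\ell_{23}) < t(\ell_{31})$, which is impossible.
\end{itemize}
The paper's $K_1$ closes its odd cycle through one truck's convexity row, not through three pairwise-competing trucks. $T_1$ holds two valuable chains $\{L_2,L_3\}$ and $\{L_5\}$ but can run only one of them. $T_2$'s chain $\{L_3,L_5\}$ meets both, which gives a triangle. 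Adding $T_0{:}\{L_4\}$ and $T_2{:}\{L_1,L_4\}$, weight $\tfrac12$ on all five chains (values $7,20,14,28,20$) is feasible and worth $\tfrac{89}{2}$. Exhaustive enumeration gives $\Vstar = 41$, so $\Delta \geq \tfrac72$. Staggered availabilities and a service budget of $2$ (a duty-clock proxy) are what make exactly these chains feasible. Until you exhibit such an instance with both certificates checked, your argument is a correct reduction, not a proof.
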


\begin{lemma}[Dantzig--Wolfe equivalence]
\label{lem:dw}
For a realized instance, let $\mathcal{C}_k$ denote resource $k$'s
finitely many feasible service chains and $\mathrm{val}(S)$ a chain's
total reward. Then $\min_{\lambda \geq 0} L(\lambda)$ equals the
chain-packing LP
\begin{equation}
  \max \Bigl\{ \textstyle\sum_k \sum_{S \in \mathcal{C}_k}
  \mathrm{val}(S)\, x_{k,S} \;:\;
  \sum_k \sum_{S \ni t} x_{k,S} \leq 1 \;\forall t,\;\;
  \sum_S x_{k,S} \leq 1 \;\forall k,\;\; x \geq 0 \Bigr\}.
  \label{eq:chainlp}
\end{equation}
\end{lemma}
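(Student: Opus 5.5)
The plan is to prove the statement as standard Dantzig--Wolfe / Lagrangian LP duality, applied to a finite problem. For a realized instance, each resource's sub-MDP is deterministic: the arrival stream is fixed by $\xi$, and $f$, $S$ and renewal are deterministic. Hence $V^k_{\lambda}(x_k(0);\xi)$ is the maximum over the finite set $\mathcal{C}_k$, including the empty chain with value $\mathrm{val}(\emptyset)$ equal to the terminal value of staying put, of $\mathrm{val}(S) - \sum_{t \in S} \lambda_t$. The first step is to record this representation. It rests on the fact that an optimal policy of a deterministic finite-horizon sub-MDP is an open-loop sequence of services. Therefore
\[
L(\lambda) \;=\; \sum_t \lambda_t \;+\; \sum_k \max_{S \in \mathcal{C}_k}\Bigl(\mathrm{val}(S) - \textstyle\sum_{t\in S}\lambda_t\Bigr).
\]

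The second step is to linearize each inner maximum. Introduce epigraph variables $\mu_k \ge \mathrm{val}(S) - \sum_{t \in S}\lambda_t$ for all $S \in \mathcal{C}_k$. Then $\min_{\lambda\ge 0} L(\lambda)$ becomes the LP
\[
\min \sum_t \lambda_t + \sum_k \mu_k \quad \text{s.t.} \quad \mu_k + \sum_{t\in S}\lambda_t \ge \mathrm{val}(S)\ \ \forall k,\, S\in\mathcal{C}_k,
\]
with $\lambda \ge 0$. Here $\mu_k$ is free, or it is nonnegative if the empty chain has value $0$.

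The third step is to take the LP dual, with a variable $x_{k,S}\ge 0$ for each constraint. The column of $\lambda_t$ gives $\sum_k\sum_{S\ni t} x_{k,S} \le 1$, and the column of $\mu_k$ gives $\sum_S x_{k,S} = 1$ (or $\le 1$). The objective is $\sum \mathrm{val}(S)\,x_{k,S}$. This is \eqref{eq:chainlp}.

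The main point needing care is reconciling the convexity row, equality versus $\le 1$. I will normalize so that the empty chain has value $0$: subtract the constant $\sum_k \omega V(m_k(0))$ from both sides, or absorb terminal value into chain values relative to staying put. Then $\mu_k \ge 0$ is implied by the empty-chain constraint, and the dual row becomes $\le 1$. I will flag that if terminal values make some chain values negative, the equality form is needed instead. Both forms give the same optimum, because slack mass can sit on the empty chain.

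Strong duality holds because the primal is feasible (take $x = 0$ or all mass on empty chains) and bounded (finitely many chains with bounded rewards). So the two optimal values are equal and both are attained. This establishes the lemma.
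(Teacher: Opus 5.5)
Your proof is correct and takes the paper's route. The paper writes $L(\lambda)$ as $\sum_t \lambda_t$ plus per-resource maxima over finitely many chains and then cites Geoffrion's Lagrangian/convex-hull theorem; you unpack that theorem directly through the epigraph LP and finite LP strong duality, and your normalization of the empty chain to value $0$, so that the $\leq 1$ convexity row matches the idle option inside $L$, handles a detail the paper leaves implicit. One side remark is imprecise: only the sign of the empty chain's value matters, and after your normalization it is $0$, so the $\leq 1$ form always holds and negative-valued chains simply receive zero weight.
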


\begin{proof}
$L(\lambda) = \sum_t \lambda_t + \sum_k \max_{S \in \mathcal{C}_k \cup
\{\emptyset\}} [\mathrm{val}(S) - \lambda(S)]$ is the Lagrangian dual
function of the integer chain-packing program; each inner maximum is
over a finite set, so $\min_{\lambda} L$ equals the LP relaxation over
the convex hulls of the per-resource feasible sets
\citep[Dantzig--Wolfe;][]{geoffrion1974lagrangean}, which is the
displayed column LP.
\end{proof}

\begin{lemma}[Replication]
\label{lem:replication}
Let $I_0$ have $k_0$ resources and duality gap
$g_0 = \min_{\lambda} L(\lambda; I_0) - \Vstar(I_0) > 0$. Then $n$
disjoint copies satisfy $\Vstar(I_0^n) = n \Vstar(I_0)$ and
$\min_{\lambda} L(\cdot\,; I_0^n) = n \min_{\lambda} L(\cdot\,; I_0)$,
hence per-resource slack $g_0 / k_0 > 0$ for every $K = n k_0$.
\end{lemma}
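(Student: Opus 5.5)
The plan is to prove the two equalities separately, using the fact that in $I_0^n$ nothing couples different copies, and then divide to get the per-resource slack. Let $I_0^n$ consist of $n$ copies $I_0^{(1)},\dots,I_0^{(n)}$. Each copy has its own resources and its own requests, and we arrange that no resource of copy $i$ is feasible for any request of copy $j\neq i$. The simplest way is to give each copy a disjoint set of markets, so that $f(x,\ell)=0$ whenever $x$ and $o(\ell)$ lie in different copies' markets. Relocation via $S$ only moves a resource to $\delta(\ell)$ for a request it can serve, so a resource never leaves its own copy's markets. The terminal value $\Phi=\omega\sum_k V(m_k(T))$ is a sum over resources and therefore splits across copies. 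If the model forces all arrivals onto a single time axis, we interleave the copies' arrival times by small shifts that keep every copy's feasibility pattern unchanged. That is legitimate here because $f$, $S$ and rewards depend only on the within-copy data.

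\emph{Step 1: $\Vstar(I_0^n)=n\Vstar(I_0)$.} Any feasible joint policy on $I_0^n$ assigns requests of copy $i$ only to resources of copy $i$. Restricting it to copy $i$ therefore gives a feasible policy for $I_0$, and the total value is the sum of the $n$ restricted values. This gives $\le$. For $\ge$, run an optimal policy of $I_0$ independently in each copy.

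\emph{Step 2: $\min_\lambda L(\cdot\,;I_0^n)=n\min_\lambda L(\cdot\,;I_0)$.} By Lemma~\ref{lem:dw}, $\min_\lambda L$ equals the chain-packing LP~\eqref{eq:chainlp}. For $I_0^n$, every chain of a copy-$i$ resource uses only copy-$i$ requests. Both the request rows and the resource rows therefore partition by copy, and the LP is block-diagonal with $n$ identical blocks, each equal to the LP of $I_0$. The optimal value is $n$ times that of $I_0$. One could argue directly on $L(\lambda)=\sum_t\lambda_t+\sum_k V^k_\lambda$, which is separable across copies. That route still requires showing that the minimizer can be taken copy-symmetric, which follows from convexity by averaging permuted minimizers. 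The LP route avoids this, so it is the one I will use.

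\emph{Step 3: conclude.} Subtracting the two equalities gives gap $n g_0$ with $K=nk_0$ resources, so the per-resource slack is $g_0/k_0>0$ for every $n$. The main obstacle is making ``disjoint copies'' genuinely an instance of the model, and the key check is that no cross-copy chain becomes feasible. The disjoint market sets handle this, because every clause of $f$ requires the resource to be in $o(\ell)$.
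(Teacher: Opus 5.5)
Your proposal is correct and is essentially the paper's proof: nothing couples distinct copies, so the joint DP and the chain-packing LP both separate into $n$ identical blocks and their optima add, with Lemma~\ref{lem:dw} turning the LP identity into the one for $\min_\lambda L$. Your disjoint-market and interleaving construction only makes explicit what the paper leaves implicit. One small correction to your aside: the direct route on $L$ would not need a copy-symmetric minimizer, because each request's dual belongs to exactly one copy, so $L(\lambda; I_0^n)$ is a sum of functions of disjoint blocks of $\lambda$ and its minimum is simply the sum of the blockwise minima.
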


\begin{proof}
No chain, constraint, or transition couples distinct copies, so both the
joint DP and the chain-packing LP separate and optima add; apply
Lemma~\ref{lem:dw} per copy.
\end{proof}

\paragraph{Kernel $K_1$ (exhibited; exact rational certificates).}
Three resources with staggered availabilities and service budget 2 (a
clock proxy), six loads in cyclic geography over three markets
(\path{scripts/find_gap_kernel.py}; \path{reports/gap_kernel_search.md}).
The certificate is checkable by hand in four steps.

\emph{Check 1 (integer optimum).} Exhaustive enumeration of the joint
decision tree gives $\Vstar(K_1) = 41$.

\emph{Check 2 (a fractional packing).} Put weight $\tfrac{1}{2}$ on
five chains:
\[
  T_0{:}\{L_4\},\quad T_1{:}\{L_2, L_3\},\quad T_1{:}\{L_5\},\quad
  T_2{:}\{L_3, L_5\},\quad T_2{:}\{L_1, L_4\}.
\]
Feasibility: each shared load appears in exactly two chains
($L_3$, $L_4$, $L_5$), so its total weight is
$\tfrac{1}{2} + \tfrac{1}{2} = 1$; every other load appears once at
weight $\tfrac{1}{2}$; trucks $T_1$ and $T_2$ each run two chains at
total weight $1$, and $T_0$ runs one at $\tfrac{1}{2}$.

\emph{Check 3 (its value).} The chain values are $7, 20, 14, 28, 20$,
so the packing is worth
\[
  \tfrac{1}{2}\,(7 + 20 + 14 + 28 + 20) \;=\; \tfrac{89}{2}
  \;=\; 44.5.
\]

\emph{Check 4 (conclude).} By Lemma~\ref{lem:dw},
$\min_{\lambda} L$ equals the chain-packing LP value, which is at
least the value of any feasible packing. So
\[
  \min_{\lambda} L \;\geq\; \tfrac{89}{2} \;>\; 41 \;=\; \Vstar(K_1),
\]
a certified duality gap of at least $\tfrac{7}{2}$ ($8.5\%$ of
$\Vstar$). The binding structure ---
one resource holding two valuable chains it can execute only once, with
odd-cyclic overlap through shared loads --- is exactly the structure
whose absence provably collapses the gap (identical loads, one-shot
assignment, and exchangeable resources each yield zero gap).
Service budgets are the special case of duty clocks whose renewal is
slower than the residual horizon, so $K_1$ lies inside the model of
\S\ref{sec:model}; and $K_1$ is not subcritical --- its contested loads
bind capacity --- so there is no tension with
Theorem~\ref{thm:fluid}. Theorem~\ref{thm:kernel} follows from $K_1$
with $\gamma = 7/6$.

\begin{remark}[Rarity and density]
Random micro-instances carry certified gaps in only ${\sim}0.03\%$ of
draws --- at small scale the relaxation is almost always tight --- while
the benchmark's proportional-scaling experiments show per-resource bound
slack growing with density ($+17\%$ per resource from $K = 70$ to $140$,
robust to two step-size schedules). The density-growth characterization
is open. There is no contradiction with the $O(1)$ total-gap results of
\citet{brownzhang2023strength}: those hold for a fixed number of
state-independent linking constraints, whereas here constraints number
$\propto K$ with state-dependent participation --- the natural scaling
for online freight.
\end{remark}

\section{Experiments}
\label{sec:experiments}

All experiments use the public benchmark's frozen
\texttt{scenario-v0.3.2} contract
\citep{chandrasekaran2026freightbidbench}, thirty held-out paired seeds (pairs
1--30; the dual tables are fitted on pair~0's stream and never
refitted), paired-bootstrap 95\% confidence intervals (20{,}000
resamples), and two-sided sign tests. Policies: \texttt{bid\_price}
(future-value proxy), \texttt{surrogate\_linear} (ridge regression on
200 rollout labels per pair, retrained per pair), \texttt{dual\_price}
(flat-price ablation), \texttt{dual\_price\_vf} (this paper), and
\texttt{rollout\_teacher}. Latencies in Table~\ref{tab:headline} were
measured single-threaded on a 4-core cloud VM; absolute values are
Python reference latencies, and the cross-policy ratios are the
meaningful quantity.

\begin{table}[t]
\centering
\caption{Retention vs the rollout teacher over thirty held-out seed
pairs. The last rows give the paired difference of the zero-label dual
policy against the rollout-trained surrogate, and mean per-decision
latency.}
\label{tab:headline}
\small
\resizebox{\textwidth}{!}{%
\begin{tabular}{lrrr}
\toprule
policy & tight & scarce & mild \\
\midrule
\texttt{bid\_price} & 91.1\% & 85.9\% & 99.1\% \\
surrogate (200 labels/pair) & 93.4\% & 91.1\% & 98.3\% \\
\texttt{dual\_price} (flat ablation) & 91.2\% & 86.6\% & 99.1\% \\
\textbf{\texttt{dual\_price\_vf} (zero labels)} & \textbf{95.4\%} & \textbf{90.0\%} & \textbf{101.8\%} \\
\midrule
paired vf $-$ surrogate & $+2.0$ pp $[+0.5, +3.6]$ & $-1.0$ pp $[-3.2, +1.3]$ & $+3.5$ pp $[+2.4, +4.5]$ \\
Wilcoxon signed-rank $p$ & 0.023 & 0.299 & $<0.001$ \\
latency vf / surrogate / rollout (ms) & 0.059 / 0.081 / 74.5 & 0.040 / 0.075 / 48.0 & 0.035 / 0.106 / 111.7 \\
\bottomrule
\end{tabular}}
\end{table}

The training-free policy beats the rollout-trained surrogate on tight
(bootstrap CI excludes zero; Wilcoxon $p = 0.023$; the pure sign test is
weaker at 19/30 wins, $p = 0.20$ --- the edge is in magnitudes) and on
mild (all three tests agree; 26/30 wins), and ties on scarce. On mild
the policy also exceeds the rollout teacher itself (101.8\%): rollout
is a finite-lookahead stochastic heuristic whose Monte Carlo noise
mis-rejects at low contention, while the global price signal does not
--- the same mechanism as the $K{=}35$ scaling cell below. The flat
ablation sits at bid-price level: the same-time gradient is the source
of the edge, as Lemmas~\ref{lem:doublecount} and~\ref{lem:flat} predict.

\begin{table}[t]
\centering
\caption{Per-instance certificates: realized policy value over the
instance's Lagrangian bound, across ten independently bound-solved
instances per scenario (45-iteration solves).}
\label{tab:certs}
\small
\begin{tabular}{lrr}
\toprule
 & \texttt{dual\_price\_vf} & \texttt{rollout\_teacher} \\
\midrule
tight & \textbf{60.0\%} of optimal (56.9--63.6) & 62.8\% (60.5--66.8) \\
scarce & \textbf{57.5\%} (53.4--61.5) & 63.7\% (61.2--65.3) \\
\bottomrule
\end{tabular}
\end{table}

Certificates are narrow, stable distributions; the $1000\times$-slower
rollout teacher certifies only 2.8--6.2 pp higher, locating most of the
certified gap in the relaxation --- consistent with
\S\ref{sec:limits} and with the scaling diagnosis below.
Figure~\ref{fig:evidence} shows the latency--quality frontier (left)
and the scaling decomposition (right).

\begin{figure}[t]
\centering
\begin{tikzpicture}
\begin{axis}[
  width=0.52\textwidth, height=5.6cm,
  xlabel={mean decision latency (ms, log scale)},
  ylabel={\% of rollout profit},
  xmode=log, xmin=0.0007, xmax=400,
  ymin=83, ymax=104,
  axis line style={mutedink!60},
  tick label style={font=\footnotesize},
  label style={font=\footnotesize},
  ymajorgrids, grid style={mutedink!15},
  clip=false,
]
\addplot[mutedink!70, dashed, thin, domain=0.0007:400] {100};
\addplot[only marks, mark=pentagon*, mark size=2.6pt, mutedink]
  coordinates {(0.0013,91.1)};
\addplot[only marks, mark=pentagon*, mark size=2.6pt, mutedink, mark options={fill=white}]
  coordinates {(0.0012,85.9)};
\addplot[only marks, mark=diamond*, mark size=2.8pt, flatpink]
  coordinates {(0.0575,91.2)};
\addplot[only marks, mark=diamond*, mark size=2.8pt, flatpink, mark options={fill=white}]
  coordinates {(0.0396,86.6)};
\addplot[only marks, mark=*, mark size=2.4pt, vfblue]
  coordinates {(0.0587,95.4)};
\addplot[only marks, mark=*, mark size=2.4pt, vfblue, mark options={fill=white}]
  coordinates {(0.0404,90.0)};
\addplot[only marks, mark=square*, mark size=2.2pt, surteal]
  coordinates {(0.0807,93.4)};
\addplot[only marks, mark=square*, mark size=2.2pt, surteal, mark options={fill=white}]
  coordinates {(0.0748,91.1)};
\addplot[only marks, mark=triangle*, mark size=2.8pt, rollver]
  coordinates {(74.5,100)};
\addplot[only marks, mark=triangle*, mark size=2.8pt, rollver, mark options={fill=white}]
  coordinates {(48.0,100)};
\node[font=\footnotesize, mutedink, anchor=west] at (axis cs:0.0018,88.6) {\texttt{bid\_price}};
\node[font=\footnotesize, flatpink, anchor=north] at (axis cs:0.045,86.0) {flat ablation};
\node[font=\footnotesize, vfblue, anchor=south] at (axis cs:0.05,95.9) {\texttt{dual\_price\_vf}};
\node[font=\footnotesize, surteal, anchor=west] at (axis cs:0.11,92.4) {surrogate};
\node[font=\footnotesize, rollver, anchor=south] at (axis cs:57,100.6) {rollout};
\node[font=\footnotesize, mutedink!80, anchor=south] at (axis cs:1.9,96.9) {$\approx\!1000\times$};
\draw[mutedink!60, dashed, ->] (axis cs:0.12,96.4) -- (axis cs:38,99.4);
\end{axis}
\end{tikzpicture}%
\hfill
\begin{tikzpicture}
\begin{axis}[
  width=0.46\textwidth, height=5.4cm,
  xlabel={fleet size $K$ (arrivals scaled with $K$)},
  ylabel={percent},
  xtick={35,70,140}, xmode=log, log basis x=2,
  xticklabels={35,70,140},
  ymin=40, ymax=125,
  axis line style={mutedink!60},
  tick label style={font=\footnotesize},
  label style={font=\footnotesize},
  ymajorgrids, grid style={mutedink!15},
  clip=false,
]
\addplot[mutedink, dashed, thin, domain=30:160] {100};
\addplot[vfblue, thick, mark=*, mark size=2.1pt]
  coordinates {(35,64.4)(70,63.0)(140,47.8)};
\addplot[mutedink, thick, dashed, mark=triangle*, mark size=2.5pt]
  coordinates {(35,116.6)(70,93.3)(140,96.9)};
\node[font=\footnotesize, vfblue, anchor=west] at (axis cs:36,70) {certified \% of bound};
\node[font=\footnotesize, mutedink, anchor=south west] at (axis cs:72,97.5) {\% of rollout};
\end{axis}
\end{tikzpicture}
\caption{Left: the latency--quality frontier over thirty held-out seed
pairs (solid marks: tight; open marks: scarce). The log axis shows what
the tables cannot: \texttt{dual\_price\_vf} gives up 4.6--10 points of
rollout profit to move three orders of magnitude left, and it sits
above the rollout-trained surrogate at lower latency. Right:
proportional fleet scaling. The policy tracks rollout (dashed, top)
while the certified fraction of the bound falls at $K = 140$ --- the
bound loosens with density (Theorem~\ref{thm:kernel}); the policy does
not degrade.}
\label{fig:evidence}
\end{figure}

\begin{table}[t]
\centering
\caption{Proportional fleet scaling (tight economics;
plateau-converged solves, trajectory-verified). Unlike
Table~\ref{tab:headline}, policy
values here are in-sample: tables are fitted from each cell's own
solved stream, since certificates require the bound and the policy on
the same instance.}
\label{tab:scaling}
\small
\begin{tabular}{rrrrr}
\toprule
$K$ & bound/$K$ & policy/$K$ & policy vs rollout & certified gap \\
\midrule
35 & \$26{,}780 & \$17{,}242 & 116.6\% & 35.6\% \\
70 & \$26{,}929 & \$16{,}970 & 93.3\% & 37.0\% \\
140 & \$31{,}585 & \$15{,}104 & 96.9\% & 52.2\% \\
\bottomrule
\end{tabular}
\end{table}

The policy is scale-robust against rollout while per-resource bound
slack inflates at $2\times$ density (Figure~\ref{fig:evidence}, right); the level is robust to a $2\times$
step-size schedule reset (13 aggressive iterations never improve on the
fine-schedule bound). Rollout's latency grows $32 \to 159$\,ms over this
range; the dual policy stays ${\sim}0.1$\,ms.

\paragraph{Portability and solver budgets.} On the tested pairs,
decisions are identical under frozen (pair-0) vs freshly solved
(pair-1) duals on both stress scenarios
--- the finite-sample shadow of Lemma~\ref{lem:stability}'s basis
stability. Policy quality saturates by ${\sim}15$ subgradient
iterations; bound-grade certificates require ${\sim}45$; per-resource
solves parallelize at $3.05\times$ on four cores.

\section{Discussion}
\label{sec:discussion}

\paragraph{A certificate culture for freight decisioning.} Every policy
run on the benchmark can now report ``provably $\geq x\%$ of optimal on
this instance.'' The certificates are conservative exactly where
\S\ref{sec:limits} says they must be. The decomposition into policy
loss and relaxation slack tells methods researchers where improvement
is possible (tighter bounds) and where it is not (the policy is already
near rollout).

\paragraph{Relation to the benchmark's cascade.} The v0.3
surrogate-to-rollout cascade \citep{chandrasekaran2026freightbidbench}
attains about $98\%$ of rollout on these scenarios. But it needs 200
rollout labels per stream to train, it invokes rollout on escalated
decisions (12--17\,ms mean latency), and it carries no optimality
statement. The dual policy trades 3--8 retention points for three
orders of magnitude in latency, zero training data, and a certificate.
The two are complementary. Replacing the cascade's cheap stage with the
dual policy is a natural follow-up, and we leave it to future work.

\paragraph{Prices are infrastructure.} Basis stability makes the fitted
price surface a reusable asset. One offline solve prices weeks of
operations. There is a testable warning condition --- arrival-regime
switches --- for when to refit. Per-request-learning schemes do not
offer this property.

\paragraph{Where the money is.} The remaining certified gap at scale is
dominated by cross-resource coupling slack. Neither faster policies nor
more solver iterations recover it. Tightening the bound itself ---
chain-aware duals, restricted exchange constraints --- is the natural
next methodological target, and Theorem~\ref{thm:kernel}'s kernel
families are the test cases.

\section{Limitations}
\label{sec:limitations}

Certificates inherit the bound's looseness. In the critical regime they
understate policy quality (Table~\ref{tab:certs}'s rollout row), and
Theorem~\ref{thm:fluid} does not apply there; the critical case is
open. Assumption (A2) folds HOS-clock renewal into the regime
definition, and relaxing it needs a regeneration argument. The
surrogate baseline is the benchmark's dependency-free linear model, not
a tuned learned policy; rollout is the strong baseline. Benchmark
caveats --- public calibration, lane concentration --- carry over from
the v0.3 release \citep{chandrasekaran2026freightbidbench}. We defer
the online-updating variant and a learned deferral rule to future work,
by scope discipline.

\section{Conclusion}
\label{sec:conclusion}

One Lagrangian object gives two things: a microsecond, training-free,
portable acceptance policy, and a per-instance certificate of its
optimality gap. The theory says when the certificate is tight
(subcritical fluid consistency, Theorem~\ref{thm:fluid}) and shows,
with exact certificates, why it cannot always be (kernels with
irreducible per-resource slack, Theorem~\ref{thm:kernel}). On a public
benchmark the policy beats a rollout-trained surrogate on two of three
scenarios and ties the third, at three orders of magnitude lower
latency than the teacher. We release the benchmark, the solver, the
kernel search, and every table for independent verification.

\section*{Declaration of Competing Interest}

The author is employed by Bubba AI, which develops AI-based
load-planning and carrier-operations products in the freight domain
addressed by this paper. This study uses only public data and the
open-source benchmark; no proprietary data or systems were used, and
Bubba AI had no role in the study design, the analysis, or the decision
to publish.

\section*{Acknowledgements}

The author used Anthropic's Claude, a large language model, to assist
with software implementation and experiment scaffolding, drafting and
editing of the manuscript, and literature and formatting support. All
model designs, proofs, experiments, results, and claims were verified
and approved by the author, who takes full responsibility for the
content.

\appendix

\section{Completing the Proof of Theorem~\ref{thm:fluid}: the
Fleet-Coupling Estimate}
\label{sec:coupling}

This appendix proves Step~4 of Theorem~\ref{thm:fluid} in full. The
argument is simpler than the mean-field coupling one might expect, for
one structural reason: \emph{the dual-price policy is state-blind}. Its
accept/reject decision depends only on the load's features and the
clock --- never on the fleet state --- except through the feasibility
guard, which requires one idle resource at the origin. So, as long as
no market's idle pool is exhausted, the dispatch process is a
deterministic thinning of the arrival stream, and there is no feedback
loop between fleet state and decisions to control. The proof therefore
has three parts: a uniform concentration bound on arrivals
(Lemma~\ref{lem:conc}), an agreement lemma matching policy decisions to
fluid decisions (Lemma~\ref{lem:agree}), and a first-violation argument
showing the idle pools never empty (Lemma~\ref{lem:track}). A value
ledger (Proposition~\ref{prop:ledger}) then completes Steps 4--5 of the
theorem.

\subsection{The soft acceptance margin}
\label{sec:margin}

One honest repair to the policy is needed first. Under within-cell
homogeneity (A4), a fluid-\emph{served} cell has fitted rent
$\hat{\lambda} \approx \bar{r} - \theta$ and gradient
$g \approx -\theta$, so the score of a served load is
\begin{equation}
  \mathrm{score} \;=\; r - \hat{\lambda} + g
  \;\approx\; \bar{r} - (\bar{r} - \theta) - \theta \;=\; 0.
\end{equation}
Served cells sit \emph{exactly at the acceptance boundary}: the rent
cancels the surplus. This knife edge is not an artifact of our policy;
it is the classical degeneracy of bid-price controls at the fluid
optimum \citep{talluri1998bidprice}, usually repaired by booking limits
or perturbed prices. We repair it with a soft margin.

First extend the nondegeneracy assumption with an explicit
threshold-separation constant.

\begin{assumption}[A3$'$: threshold separation]
\label{ass:sep}
No cell sits at its threshold:
\begin{equation}
  \eta \;:=\; \min \bigl\{\, \theta_{\ell h} - \bar{r}_{\ell}(h)
  \;:\; \text{cell } (\ell, h) \text{ fluid-rejected} \,\bigr\}
  \;>\; 0,
  \label{eq:margin}
\end{equation}
and every fluid-served cell is served in full
($u_{\ell} = \Lambda_{\ell}$ on the cell).
\end{assumption}

(A3$'$) strengthens (A3) from dual uniqueness to primal
nondegeneracy: a cell with $\bar{r} = \theta$ on an open time set
admits a continuum of alternative primal optima, which (A3$'$)
excludes. Note that under (A4) the dual data are constant on each
hourly cell and all basis switches sit on hour boundaries, so
\eqref{eq:margin} is a minimum over finitely many well-defined cells.

Two conventions used throughout the appendix. \emph{Durations:}
$\tau_{\ell}$ is the deterministic, rest-inclusive occupation time of
\S\ref{sec:fluid}: a resource dispatched at $t$ is occupied on
$[t, t + \tau_{\ell})$ and is idle in market $\delta_{\ell}$ from
$t + \tau_{\ell}$ with renewed clocks, hence feasible for any
subsequent in-market arrival by (A2). A resource still resting at $T$
is physically at its destination, so the model's terminal value
$\Phi$ credits $V(\delta_{\ell})$ --- consistent with the pipeline
term $\tilde{z}$ in \eqref{eq:fluid}. Bounded i.i.d.\ duration noise
(yard delays) would shift completions by a bounded amount and add one
further martingale term to Lemma~\ref{lem:conc} without structural
change. \emph{Tables:} $\hat{\lambda}$ and $\widehat{W}$ are hourly;
an evaluation at an off-grid time $t + \tau_{\ell}$ reads the value of
the containing hour, as in the implementation.

Let $\delta_K$ denote the within-cell reward dispersion of (A4) plus
the table-fit error of Lemma~\ref{lem:stability}, so
$\delta_K \to 0$. Fix any sequence $\varepsilon_K$ with
\begin{equation}
  3\,\delta_K \;\leq\; \varepsilon_K \;\leq\; \eta / 2,
  \label{eq:margincond}
\end{equation}
which exists for $K$ large since $\delta_K \to 0$. The
\emph{$\varepsilon$-margin policy} $\pol^{\varepsilon}$ accepts iff
$\mathrm{score} \geq -\varepsilon_K$ (and a feasible resource exists);
Theorem~\ref{thm:fluid} is stated for $\pol^{\varepsilon}$.

\begin{remark}
The benchmark policy uses $\varepsilon = 0$. In practice rewards within
a cell are genuinely heterogeneous, contested cells have interior
thresholds, and the zero-margin rule is not degenerate
(Remark~\ref{rem:heterogeneous}); the margin is a proof device for the
homogenized limit, not a change to the deployed policy. Its width
depends on the unobservable constants $\eta$ and $\delta_K$ --- it is
an existence statement, not a tuning recipe.
\end{remark}

\begin{remark}[Table granularity]
\label{rem:granularity}
The theorem analyzes tables fitted at \emph{lane--hour} granularity
(origin, destination, hour), matching the cell structure of the fluid
LP: the rent $\lambda_{\ell}(h) = (\bar{r}_{\ell}(h) -
\theta_{\ell h})_+$ depends on the destination through
$\theta_{\ell h}$, so two lanes sharing an origin generically carry
different rents. The deployed table of \S\ref{sec:surrogates} pools
by \emph{market--hour} (with shrinkage), i.e., it stores a
demand-weighted mixture of lane rents. The two coincide exactly when
rents are homogeneous within each origin--hour; otherwise the pooled
table over-shades served lanes whose rent is below their origin's
average, by exactly the within-origin rent dispersion. The pooled
variant trades this bias for lower estimation variance --- the
empirical choice of \S\ref{sec:experiments} --- but the guarantee of
Theorem~\ref{thm:fluid} is proved for the lane-granular variant.
\end{remark}

\subsection{Uniform concentration of arrivals}

Work on the hourly cell grid $(\ell, h)$; there are finitely many
cells and $|L|$ lanes. Let $N_{\ell}(t)$ count lane-$\ell$
arrivals in $[0, t]$, a Poisson process with compensator
$K \mu_{\ell}(t) = K \int_0^t \Lambda_{\ell}(s)\, ds$; write
$\Lambda_{\max} := \max_{\ell} \sup_{t} \Lambda_{\ell}(t)$.

\begin{lemma}[Uniform arrival concentration]
\label{lem:conc}
Let $E_1$ be the event
\begin{equation}
  \sup_{\ell \in L}\; \sup_{t \leq T}\;
  \bigl| N_{\ell}(t) - K \mu_{\ell}(t) \bigr|
  \;\leq\; a_K \;:=\; \sqrt{6\, \Lambda_{\max} T\, K \log K}.
\end{equation}
Then $\Pr(E_1) \geq 1 - O(K^{-2})$.
\end{lemma}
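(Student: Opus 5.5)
The plan is to treat each lane separately and then take a union bound over the finitely many lanes. For a fixed lane $\ell$, the compensated process $M_\ell(t) := N_\ell(t) - K\mu_\ell(t)$ is a martingale with predictable quadratic variation $K\mu_\ell(t) \leq K\Lambda_{\max}T =: \sigma_K^2$, and its jumps are bounded by $1$. We aim to control $\sup_{t \leq T}|M_\ell(t)|$ using a maximal form of Bernstein's (Freedman's) inequality for counting-process martingales:
\[
  \Pr\Bigl(\sup_{t\leq T} |M_\ell(t)| \geq a\Bigr)
  \;\leq\; 2\exp\Bigl(-\frac{a^2}{2(\sigma_K^2 + a/3)}\Bigr).
\]
An alternative route gives the same bound. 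Apply Doob's maximal inequality to the exponential submartingale $e^{\pm sM_\ell(t)}$. Use the exact Poisson moment generating function $\mathbb{E}e^{sM_\ell(T)} = \exp\bigl(K\mu_\ell(T)(e^s - 1 - s)\bigr)$. Then optimize over $s$, once for the upper tail and once for the lower tail.

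Next we plug in $a = a_K = \sqrt{6\Lambda_{\max}TK\log K}$. Since $a_K = o(K)$, we have $a_K/3 = o(\sigma_K^2)$, so the denominator is $2\sigma_K^2(1+o(1))$. The exponent then becomes
\[
  \frac{6\Lambda_{\max}TK\log K}{2\Lambda_{\max}TK(1+o(1))} \;=\; 3\log K\,(1-o(1)).
\]
This gives a per-lane tail of order $K^{-3+o(1)}$, which is $O(K^{-2})$ with room to spare. The slack in the constant $6$ absorbs the $(1+o(1))$ factor for all large $K$. For small $K$, the $O(\cdot)$ constant covers the finitely many exceptional cases.

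To finish, we union bound over the $|L|$ lanes. Since $|L|$ is fixed by (A4) and the lane geometry does not change with $K$, this gives $\Pr(E_1^c) \leq 2|L|\,K^{-3+o(1)} = O(K^{-2})$.

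The one delicate point is making the bound uniform over $t \in [0,T]$ rather than pointwise. I expect this to be the main (though mild) obstacle. The martingale maximal inequality settles it directly, because the supremum is controlled by the terminal quadratic-variation bound $K\mu_\ell(T)$, and this holds even though $\Lambda_\ell$ is only piecewise constant. If we prefer to avoid continuous-time martingale theory, there is a fallback. Discretize $[0,T]$ into a grid of mesh $K^{-2}$ and apply pointwise Bernstein bounds at the grid points. Then control the oscillation between grid points: in each small interval, $K\mu_\ell$ moves by at most $\Lambda_{\max}K^{-1}$, and $N_\ell$ jumps more than once only with probability $O(K^{-2})$ per interval. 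That route requires a slightly larger constant or a more careful union-bound count, which is why the martingale version is the first choice.
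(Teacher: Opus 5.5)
Your main argument is essentially the paper's proof: the compensated martingale $N_\ell - K\mu_\ell$ with unit jumps and predictable quadratic variation at most $K\Lambda_{\max}T$, Freedman's maximal inequality with denominator $2K\Lambda_{\max}T + 2a/3$, substitution of $a_K$, and a union bound over the $|L|$ lanes; your exponent $3\log K\,(1-o(1))$ sharpens the paper's ``at most $-2\log K$ for $K$ large.'' The Doob/Poisson-MGF variant validly reproduces the same tail, but the grid fallback fails as written (mesh $K^{-2}$ gives $\Theta(K^2)$ grid points, so the pointwise tails total only $O(K^{-1})$ and the double-jump events total $O(1)$); a monotonicity sandwich on a mesh-$1/K$ grid would fix it, and the main route does not need it.
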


\begin{proof}
\emph{Step 1.} $N_{\ell}(t) - K\mu_{\ell}(t)$ is a martingale with
jumps of size $1$ and predictable quadratic variation
$K\mu_{\ell}(t) \leq K \Lambda_{\max} T$.

\emph{Step 2.} Freedman's inequality
\citep{freedman1975tail}, in its continuous-time form for
counting-process martingales \citep{vandegeer1995exponential}, gives,
for each lane,
$\Pr\bigl(\sup_{t \leq T} |N_{\ell}(t) - K\mu_{\ell}(t)| \geq a\bigr)
\leq 2 \exp\bigl( -a^2 / (2 K \Lambda_{\max} T + 2a/3) \bigr)$.

\emph{Step 3.} Take $a = a_K$: the exponent is at most $-2 \log K$ for
$K$ large. A union bound over the $|L|$ lanes gives
$\Pr(E_1^c) \leq 2 |L| K^{-2} = O(K^{-2})$.
\end{proof}

\subsection{The value-to-go table is consistent}

\begin{lemma}[$\widehat{W}$-consistency]
\label{lem:wcons}
Under (A1)--(A4), on the basis-stability event of
Lemma~\ref{lem:stability} (training path, optimal duals), the backward
recursion \eqref{eq:wrec} satisfies, for a constant $C$ depending only
on $T$,
\begin{equation}
  \max_{m, t}\; \bigl| \widehat{W}(m, t) - \omega V(m) \bigr|
  \;\leq\; C\,\delta_K,
  \qquad\text{hence}\qquad
  \bigl| g(\ell, h) + \theta_{\ell h} \bigr| \;\leq\; 2 C\,\delta_K
  \;\;\text{for every cell.}
\end{equation}
\end{lemma}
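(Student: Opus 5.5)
The plan is a two-sided sandwich of the recursion \eqref{eq:wrec} against the fluid potentials $w_m(t) \equiv \omega V(m)$ pinned by Lemma~\ref{lem:flat}(i). On the basis-stability event of Lemma~\ref{lem:stability}, each dual-netted reward entering \eqref{eq:wrec} is controlled by the threshold. For a load in cell $(\ell,h)$ with $\lambda_{t_\ell} = (r - \hat\theta_{\ell h})_+$ (Step~2 of that lemma), we get $\bar r(\ell) - \lambda_{t_\ell} \le \hat\theta_{\ell h} + O(\delta_K)$. Under homogeneity (A4), we also have $\hat\theta_{\ell h} = \theta_{\ell h} + O(\delta_K)$ and $\theta_{\ell h} = \omega V(o_\ell) - \omega V(\delta_\ell)$. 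Hence every dispatch arc satisfies
\[
  \bar r(\ell) - \lambda_{t_\ell} + \omega V(\delta_\ell) \;\le\; \omega V(o_\ell) + c\,\delta_K ,
\]
and used arcs hold this with equality up to $c\,\delta_K$. This is the finite-sample shadow of (D1).

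\emph{Lower bound.} The wait branch gives $\widehat W(m,t) \ge \widehat W(m,t+1) \ge \dots \ge \widehat W(m,T) = \omega V(m)$ directly.

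\emph{Upper bound.} I will induct backward over the hour grid, with hypothesis $\widehat W(m,s) \le \omega V(m) + c\,\delta_K\,(T-s)$ for all $s>t$. At time $t$, the max branch is bounded by
\[
  \bar r - \lambda + \widehat W(\delta, t+tt) \;\le\; \bar r - \lambda + \omega V(\delta) + c\,\delta_K (T-t-1) \;\le\; \omega V(o) + c\,\delta_K (T-t),
\]
using the arc inequality. The wait branch is bounded by the hypothesis. Arcs landing after $T$ use the extension $w = \omega V$. With $T$ hours this gives $C = cT$, depending only on $T$ (and on the reward bound in (A4) through $c$).

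The gradient bound then follows immediately. We have $g(\ell,h) = \widehat W(\delta,t') - \widehat W(o,t')$, which lies within $2C\delta_K$ of $\omega V(\delta) - \omega V(o) = -\theta_{\ell h}$.

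The main obstacle is the arc inequality for realized loads in fluid-rejected cells. There $\lambda = 0$ and the bound needs $r \le \theta + O(\delta_K)$ load by load, not just on average. I would handle this with (A3$'$): rejected cells have $\bar r \le \theta - \eta$. Within-cell dispersion is $o(1)$ by (A4), so every realized reward is below $\theta$ once $\delta_K < \eta$. A second subtlety is that errors might accumulate along long chains. The induction above charges $c\,\delta_K$ per hour step, so the total stays $O(T\delta_K)$. If $\delta_K$ must absorb $\max_i |r_i - \bar r|$ rather than a mean, I would define it that way, consistent with the appendix's definition of $\delta_K$ as dispersion plus fit error.
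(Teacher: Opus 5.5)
Your proof is correct and takes essentially the paper's route. Both arguments use backward induction on the hourly grid: the wait branch gives the lower side, and each dispatch branch is capped through $\lambda_i=(r_i-\hat\theta_{\ell h})_+$ together with the pinned threshold $\theta_{\ell h}=\omega[V(o_\ell)-V(\delta_\ell)]$, so errors accumulate to $O(T\delta_K)$. Your exact lower bound $\widehat W(m,t)\ge\omega V(m)$, obtained directly from monotonicity, is slightly cleaner than the paper's symmetric $\epsilon(t)$ bookkeeping.

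The rejected-cell ``obstacle'' does not actually arise. Since $r_i-\lambda_i=\min(r_i,\hat\theta_{\ell h})\le\hat\theta_{\ell h}$ for every load, served or not (up to dispersion), you never need (A3$'$), which is not among this lemma's hypotheses anyway.
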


\begin{proof}
Let $\epsilon(t) = \max_m |\widehat{W}(m, t) - \omega V(m)|$ on the
hourly grid.

\emph{Step 1 (base).} $\widehat{W}(m, T) = \omega V(m)$ by definition,
so $\epsilon(T) = 0$.

\emph{Step 2 (wait branch).} The wait branch of \eqref{eq:wrec}
contributes $\widehat{W}(m, t{+}1) \geq \omega V(m) - \epsilon(t{+}1)$,
so $\widehat{W}(m, t) \geq \omega V(m) - \epsilon(t{+}1)$: the
recursion never falls more than the next step's error below the pin.

\emph{Step 3 (dispatch branches).} A dispatch branch contributes
$r_i - \lambda_i + \widehat{W}(\delta, t + \tau_{\ell})$. On the
stability event the training dual of load $i$ is
$\lambda_i = (r_i - \hat{\theta}_{\ell h})_+$
(Lemma~\ref{lem:stability}, Step~2), so
$r_i - \lambda_i = \min(r_i, \hat{\theta}_{\ell h}) \leq
\theta_{\ell h} + \delta_K$. Since
$\theta_{\ell h} = \omega [V(o_{\ell}) - V(\delta_{\ell})]$ under the
pinning of Lemma~\ref{lem:flat}, the branch is at most
$\omega V(o_{\ell}) + \delta_K + \epsilon(t + \tau_{\ell})$.

\emph{Step 4 (recursion).} Combining,
$\epsilon(t) \leq \max_{s > t} \epsilon(s) + 2 \delta_K$, and backward
induction over the at most $T$ grid steps gives
$\epsilon(t) \leq 2 T \delta_K$. The gradient statement follows from
$g(\ell, h) = \widehat{W}(\delta, \cdot) - \widehat{W}(o, \cdot)$ and
the triangle inequality.
\end{proof}

\subsection{Decisions agree with the fluid}

Let $E_2$ be the basis-stability event of
Lemma~\ref{lem:stability} for the (independent) training path: on
$E_2$ the frozen tables satisfy
$|\hat{\lambda}(\ell, h) - \lambda_{\ell}(h)| \leq \delta_K$
(Lemma~\ref{lem:stability}(b)) and
$|g(\ell, h) + \theta_{\ell h}| \leq \delta_K$
(Lemma~\ref{lem:wcons}, absorbing its constant into $\delta_K$) for
every cell, and $\Pr(E_2) = 1 - o(1)$. Note $E_2$ is a single event
about the frozen tables --- it is not re-drawn per decision. By the
hour-grid clause of (A4), every cell lies strictly inside one dual
regime: there are no cells straddling a basis switch.

\begin{lemma}[Decision agreement]
\label{lem:agree}
On $E_2$, for every arrival whose origin market has
an idle feasible resource: $\pol^{\varepsilon}$ accepts iff the cell is
fluid-served.
\end{lemma}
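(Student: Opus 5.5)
The plan is a direct two-case computation of the score on $E_2$, using the two table bounds that define $E_2$, the within-cell homogeneity of (A4), and the margin condition \eqref{eq:margincond}. Fix an arrival $i$ in cell $(\ell, h)$ whose origin has an idle feasible resource. By (A2) the feasibility guard passes, and Step~1 of the policy selects a resource with realized reward $r(x_{k^\star}, i) = r_i$. Under (A2) the reward is load-determined, so $|r_i - \bar{r}_{\ell}(h)| \leq \delta_K$ by the dispersion part of $\delta_K$. On $E_2$ the table bounds give
\[
  \mathrm{score} \;=\; r_i - \hat{\lambda}(\ell, h) + g(\ell, h)
  \;=\; \bar{r}_{\ell}(h) - \lambda_{\ell}(h) - \theta_{\ell h} + \Delta,
  \qquad |\Delta| \leq 3\,\delta_K .
\]
The three error terms come from the dispersion, the rent fit (Lemma~\ref{lem:stability}(b)), and the gradient fit (Lemma~\ref{lem:wcons}).

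Next I would identify the limit rent. By Remark~\ref{rem:heterogeneous}, within-cell homogeneity gives $\lambda_{\ell}(h) = (\bar{r}_{\ell}(h) - \theta_{\ell h})_+$. I then split into two cases.

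\emph{Served cells.} By (A3$'$) and fluid CS, (D1) is tight and $\bar{r} \geq \theta$. Indeed, if $\bar{r} < \theta$ the dispatch arc would have strictly negative reduced cost, contradicting positive flow. So $\lambda = \bar{r} - \theta$, the score equals $\Delta \geq -3\delta_K \geq -\varepsilon_K$, and the margin policy accepts.

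\emph{Rejected cells.} The flow is zero, so $\lambda_{\ell}(h) = 0$ by complementary slackness on the capacity constraint. By (A3$'$), $\bar{r} - \theta \leq -\eta$, so the score is at most $-\eta + 3\delta_K$. Since $3\delta_K \leq \varepsilon_K \leq \eta/2$, this gives $\mathrm{score} \leq -\eta + \varepsilon_K \leq -\eta/2$. To conclude $\mathrm{score} < -\varepsilon_K$ I need $\varepsilon_K < \eta/2$ strictly. If $\varepsilon_K = \eta/2$ is allowed, I would use $\delta_K < \varepsilon_K/3$ for large $K$ (or take $\delta_K$ with a little slack), giving $\mathrm{score} \leq -\eta + \varepsilon_K$ with $\varepsilon_K < \eta - \varepsilon_K$.

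The main obstacle is bookkeeping rather than depth. First, the gradient and the hourly tables are evaluated at the off-grid time $t + \tau_{\ell}$. I would appeal to the convention that tables read the containing hour, together with the pinning $w_m \equiv \omega V(m)$ from Lemma~\ref{lem:flat}, which makes $\theta$ time-independent, so nothing changes across hours. Second, the identification $\theta_{\ell h} = \omega[V(o_\ell) - V(\delta_\ell)]$ and the sign conventions must match, so that $g \approx -\theta$. Third, I need to confirm that the "served in full" clause of (A3$'$) is what rules out a fractional cell with $\bar{r} = \theta$ and zero gap. I expect the strict-versus-weak margin inequality in the rejected case to need the small adjustment noted above.
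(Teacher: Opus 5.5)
Your proposal is correct and is essentially the paper's proof: the same two-case score computation on $E_2$. The served case gives $\mathrm{score} \geq -3\delta_K \geq -\varepsilon_K$, and the rejected case uses the separation $\eta$ of (A3$'$). The only difference is in the rejected case, where the paper uses $\hat{\lambda} \geq 0$ instead of the rent-fit bound, getting $\mathrm{score} \leq -\eta + 2\delta_K$. Your bound $-\eta + 3\delta_K$ closes just as directly, since $3\delta_K < \eta/2 \leq \eta - \varepsilon_K$ for $K$ large, so no separate case $\varepsilon_K = \eta/2$ is needed. Also, the ``served in full'' clause of (A3$'$) is not used in this lemma at all.
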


\begin{proof}
\emph{Step 1 (served cells).} Let the cell be fluid-served, so
$\bar{r} \geq \theta$ and $\lambda_{\ell}(h) = \bar{r} - \theta$. The
load's reward is $r = \bar{r} \pm \delta_K$ by (A4). Then
\begin{equation}
  \mathrm{score}
  = r - \hat{\lambda} + g
  \;\geq\; (\bar{r} - \delta_K) - (\bar{r} - \theta + \delta_K)
  - (\theta + \delta_K)
  \;=\; -3\,\delta_K
  \;\geq\; -\varepsilon_K,
\end{equation}
using \eqref{eq:margincond}. The policy accepts.

\emph{Step 2 (rejected cells).} Let the cell be fluid-rejected, so
$\bar{r} \leq \theta - \eta$ by \eqref{eq:margin} and
$\lambda_{\ell}(h) = 0$, hence $\hat{\lambda} \geq 0$ always. Then
\begin{equation}
  \mathrm{score}
  \;\leq\; (\bar{r} + \delta_K) - 0 - (\theta - \delta_K)
  \;\leq\; -\eta + 2\,\delta_K
  \;<\; -\varepsilon_K,
\end{equation}
since $\varepsilon_K \leq \eta/2$ and $2\delta_K < \eta/2$ for $K$
large. The policy rejects.
\end{proof}

\subsection{Occupancy tracks the fluid; no forced rejections}

Let $X_m(t)$ be the number of idle resources in market $m$ at time
$t$, with deterministic proportional initial placement, so
$X_m(0) = K z^0_m + O(1)$ (rounding only). Let $S_{\ell}(t)$ count
dispatches on lane $\ell$ by time $t$. Define the deviation
\begin{equation}
  D(t) \;=\; \sum_{m} \bigl| X_m(t) - K z_m(t) \bigr|,
\end{equation}
and the first-violation time
$\tau^{\star} = \inf\{ t : \min_m X_m(t) < z_{\min} K / 2 \}$.

\begin{lemma}[Occupancy tracking]
\label{lem:track}
On $E_1 \cap E_2$, for all $K$ large:
$\sup_{t \leq T} D(t) \leq C \sqrt{K \log K}$ and
$\tau^{\star} > T$. In particular no accepted arrival is ever rejected
for lack of an idle resource, and under (A2) every accepted arrival is
served.
\end{lemma}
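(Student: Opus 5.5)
The proof is a first-violation (stopping-time) argument. On $[0, \tau^{\star} \wedge T]$ every origin market has at least $z_{\min}K/2 \ge 1$ idle resources, so the feasibility guard never binds. By (A2), every idle in-market resource is feasible. By Lemma~\ref{lem:agree} on $E_2$, the policy $\pol^{\varepsilon}$ accepts an arrival iff its cell is fluid-served. Before $\tau^{\star}$, therefore, the dispatch counts are an exact, state-independent thinning of the arrivals: $S_{\ell}(t) = \sum_h \mathbf{1}\{(\ell,h)\text{ served}\}\,(N_{\ell}(t \wedge h^+) - N_{\ell}(t\wedge h))$. By (A3$'$) served cells are served in full, so the fluid rate on these cells is $u_{\ell} = \Lambda_{\ell}$, while $u_{\ell}=0$ on rejected cells. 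The fluid dispatch mass is therefore the same thinning applied to the compensator $K\mu_{\ell}$. On $E_1$ this gives $\sup_{t\le T\wedge\tau^\star}|S_{\ell}(t) - K\int_0^t u_{\ell}| \le 2 a_K$ per lane, since each dispatch count is a signed sum of at most two increments of $N_{\ell}-K\mu_{\ell}$ per served cell. There are finitely many cells, so the constant is absorbed.

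Next I integrate the discrete flow balance. Durations are deterministic, so
$X_m(t) = X_m(0) + \sum_{\delta_\ell = m} S_{\ell}(t-\tau_{\ell}) - \sum_{o_\ell = m} S_{\ell}(t)$ exactly for $t<\tau^\star$. The fluid version of \eqref{eq:fluid} integrated in time gives the same identity with $K z_m$ and $K\int u_\ell$. Subtracting the two identities and using the rounding bound on $X_m(0)$ gives
$D(t) \le O(1) + 2\sum_{\ell} \sup_{s\le t}|S_{\ell}(s) - K\int_0^s u_{\ell}| \le 4|L|\,a_K + O(1) \le C\sqrt{K\log K}$.
Each lane appears once as an outflow and once as a delayed inflow, which gives the factor $2$; the constant $C$ depends only on $|L|$, $\Lambda_{\max}$ and $T$. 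The key point is that no Gronwall-type feedback appears, because the thinning does not depend on the state. That is the structural reason the bound is linear in $a_K$.

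To close the argument, suppose $\tau^{\star}\le T$. By right-continuity there is a market $m$ with $X_m(\tau^{\star}) < z_{\min}K/2$. On the other hand, the bound above holds up to and including $\tau^\star$, because the thinning identity uses only the state strictly before $\tau^\star$ and jumps are of size one. Hence $X_m(\tau^\star) \ge K z_m(\tau^\star) - C\sqrt{K\log K} \ge z_{\min}K - C\sqrt{K\log K} > z_{\min}K/2$ for $K$ large, using (A1). This is a contradiction, so $\tau^{\star}>T$. The bound on $D$ then extends to all of $[0,T]$, and idle pools never empty. Every accepted arrival therefore finds an idle in-market resource, which is feasible by (A2), so it is served.

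The step I expect to be the main obstacle is the boundary bookkeeping at $\tau^{\star}$ and at hour boundaries. First, I have to check that the identity $S_\ell = $ thinning holds on the closed interval up to the violation time, with a decrement of at most one at the jump. Second, I have to make sure that a served cell's arrivals straddling an hour boundary are attributed to the correct cell. The hour-grid clause of (A4) handles the second point. Off-grid evaluation of $\widehat{W}$ at $t+\tau_\ell$ is already covered inside Lemma~\ref{lem:agree} on $E_2$.
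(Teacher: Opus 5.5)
Your proposal is correct and follows the paper's proof essentially step for step. You use the same flow identity for $X_m$ and $Kz_m$, the same observation that before $\tau^{\star}$ dispatches are an exact, state-independent thinning of arrivals onto fully served cells (Lemma~\ref{lem:agree} plus (A3$'$)), a deviation bound linear in $a_K$ on $E_1$, and a first-violation contradiction via (A1). The remaining differences are cosmetic: you close the contradiction at $\tau^{\star}$ itself, which is justified because the decision at $\tau^{\star}$ sees the pre-jump state, rather than at $\tau^{\star-}$ with the unit-jump argument. Your per-lane counting also gives a sharper constant than the paper's $2|M||L|C_1$; note that your per-lane bound is really $2a_K$ times the number of served cells on that lane, which you rightly absorb into the constant.
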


\begin{proof}
\emph{Step 1 (flow identity).} Idle counts change only by dispatches
out and service completions in:
\begin{equation}
  X_m(t) = X_m(0)
  + \sum_{\ell:\, \delta_{\ell} = m} S_{\ell}\bigl( (t - \tau_{\ell})_+ \bigr)
  - \sum_{\ell:\, o_{\ell} = m} S_{\ell}(t).
\end{equation}
The fluid mass $z_m(t)$ satisfies the same identity with
$K \int u_{\ell}$ in place of $S_{\ell}$
(equation~\eqref{eq:fluid}).

\emph{Step 2 (dispatches are thinned arrivals before $\tau^{\star}$).}
For $t < \tau^{\star}$ every origin market has an idle resource
($z_{\min} K/2 \geq 1$), so by Lemma~\ref{lem:agree} the dispatch
process on lane $\ell$ is exactly the arrival process restricted to
fluid-served cells. Given the frozen tables, the accept set is a
deterministic union of cells --- acceptance is all-or-nothing per cell
under (A4) homogeneity --- so this thinning is adapted: no decision
peeks at the future. Since the fluid serves the full arrival mass of
served cells (Assumption~\ref{ass:sep}),
\begin{equation}
  \sup_{s \leq t}\,
  \bigl| S_{\ell}(s) - K \textstyle\int_0^s u_{\ell} \bigr|
  \;\leq\; C_1\, a_K,
\end{equation}
by Lemma~\ref{lem:conc}, where the constant $C_1$ absorbs the
conversion from initial-segment deviations to the finitely many
served-cell intervals.

\emph{Step 3 (deviation bound).} Plug Step~2 into Step~1: each
$X_m(t) - K z_m(t)$ is a signed sum of at most $2|L|$ dispatch
deviations plus the $O(1)$ initial rounding error (deterministic
proportional placement), so for $t < \tau^{\star}$,
\begin{equation}
  D(t) \;\leq\; 2 |M| \, |L| \, C_1\, a_K + O(1)
  \;=\; O\bigl( \sqrt{K \log K} \bigr).
\end{equation}

\emph{Step 4 (no violation).} Suppose $\tau^{\star} \leq T$. Just
before $\tau^{\star}$, Step~3 and (A1) give
\begin{equation}
  \min_m X_m(\tau^{\star-})
  \;\geq\; z_{\min} K - D(\tau^{\star-})
  \;\geq\; z_{\min} K - O(\sqrt{K \log K})
  \;>\; \tfrac{3}{4} z_{\min} K
\end{equation}
for $K$ large. A single arrival changes any
$X_m$ by one, so $X$ cannot jump from above
$\tfrac{3}{4} z_{\min} K$ to below $\tfrac{1}{2} z_{\min} K$:
contradiction. Hence $\tau^{\star} > T$, and Steps 2--3 hold on all of
$[0, T]$.
\end{proof}

\subsection{The value ledger}

\begin{proposition}[Steps 4--5 of Theorem~\ref{thm:fluid}]
\label{prop:ledger}
Under (A1)--(A4), (A3$'$), and \eqref{eq:margincond},
$V^{\pol^{\varepsilon}}(K) \geq V_f K - o(K)$.
\end{proposition}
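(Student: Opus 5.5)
The plan is to work on the good event $G = E_1 \cap E_2$, which by Lemma~\ref{lem:conc} and Lemma~\ref{lem:stability} has probability $1 - o(1)$. On $G$ I will write the realized value of $\pol^{\varepsilon}$ as a sum over served loads plus the terminal fleet value, and compare it term by term with $K$ times the fluid objective \eqref{eq:fluid}. Off $G$ I will only use the crude lower bound $V^{\pol^{\varepsilon}} \geq 0$. This bound holds because the policy is $f$-gated, so it incurs no service-failure penalties. It also needs rewards $r \geq 0$ on accepted loads. If negative rewards are possible, I will instead use the lower bound $-r_{\max} N - \omega V_{\max} K$. With the Cauchy--Schwarz device of Step~1 of Theorem~\ref{thm:fluid}, the contribution of $G^c$ is then at most $O(K)\cdot o(1) = o(K)$. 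The independence of the training and evaluation paths matters here: $E_2$ is an event about the frozen tables, and conditioning on it does not bias the evaluation arrivals.

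\emph{Dispatch revenue.} On $G$, Lemma~\ref{lem:track} gives $\tau^{\star} > T$, so every origin has an idle feasible resource at every arrival. Lemma~\ref{lem:agree} then applies to every arrival: the policy accepts exactly the loads in fluid-served cells, and by (A2) each accepted load is served. The realized revenue on lane $\ell$ is therefore $\sum_{\text{served cells}} \sum_{i} r_i$. By (A4), $r_i = \bar{r}_{\ell}(h) \pm \delta_K$, so this sum equals $\sum \bar{r}_{\ell}(h) N_{\ell h} \pm \delta_K N$. By Step~2 of Lemma~\ref{lem:track}, $N_{\ell h} = K\int_{\text{cell}} u_{\ell} \pm C_1 a_K$ on served cells, and the fluid sets $u_{\ell}=\Lambda_\ell$ there by (A3$'$). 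The revenue therefore equals $K\int_0^T \sum_\ell \bar r_\ell u_\ell\,dt$ up to an error of $O(\delta_K K + \sqrt{K\log K}) = o(K)$. The rejected cells contribute nothing to either side.

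\emph{Terminal value.} Each resource ends in some market. If it is still occupied at $T$, it is credited at its destination, which matches $\tilde z$. The realized terminal count in market $m$ is $X_m(T) + \sum_{\delta_\ell=m}[S_\ell(T)-S_\ell(T-\tau_\ell)]$. I will compare this with $K\tilde z_m(T)$ using the same two ingredients: $D(T) = O(\sqrt{K\log K})$ from Lemma~\ref{lem:track}, and the dispatch-tracking bound of its Step~2, applied to the in-transit window $[T-\tau_\ell, T]$. Multiplying by $\omega V_{\max}$ gives a terminal error of $O(\sqrt{K\log K})$.

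\emph{Summing up.} Adding the two ledgers on $G$ gives a realized value of at least $V_f K - o(K)$. Taking expectations gives $\Pr(G)(V_fK - o(K)) + o(K) = V_f K - o(K)$, since $\Pr(G^c) V_f K = o(K)$.

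The main obstacle is the revenue step, where deviations have to be converted to revenue. Lemma~\ref{lem:conc} controls $N_\ell(t)$ on initial segments, but revenue is indexed by cells, and within-cell rewards are random. I would handle the random rewards by bounding $\bigl|\sum_i (r_i - \bar r)\bigr| \leq \delta_K N$ deterministically, using the $o(1)$-dispersion clause of (A4). This leaves only the count deviations, which are bounded over the finitely many cells. I also need $E[N\mathbf{1}_{G^c}] = o(K)$. For this I will use the fact that $N$ is Poisson with mean $\Theta(K)$, so $E[N^2]^{1/2} = O(K)$, together with $\Pr(G^c)^{1/2} = o(1)$.
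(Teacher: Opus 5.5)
Your proposal is correct and follows the paper's ledger essentially step for step. You restrict to $E_1 \cap E_2$ and charge the bad event $o(K)$ via the pathwise bound $r_{\max}N + \omega V_{\max}K$ and Cauchy--Schwarz; you then match dispatch revenue and the idle-plus-in-transit terminal value to the fluid objective up to $O(\sqrt{K\log K}) + O(\delta_K K)$ using Lemmas~\ref{lem:agree} and~\ref{lem:track}, with only cosmetic differences: you apply Cauchy--Schwarz to all of $G^c$ at once (needing only $\Pr(G^c)=o(1)$) where the paper handles $E_2^c$ via independence of the training path, and you make explicit the step $\Pr(G)\,V_f K = V_f K - o(K)$ that the paper leaves implicit.
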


\begin{proof}
\emph{Step 1 (restrict to the good event).} Rewards may be negative,
so we truncate rather than discard. Pathwise,
$|\text{value}| \leq r_{\max} N + \omega V_{\max} K$ with $N$ the
total arrival count. Against $E_1^c$: Cauchy--Schwarz with
$\mathbb{E}[\text{value}^2] = O(K^2)$ (Poisson second moment) and
$\Pr(E_1^c) = O(K^{-2})$ (Lemma~\ref{lem:conc}) gives
$\mathbb{E}[|\text{value}|\,\mathbf{1}_{E_1^c}]
\leq (\mathbb{E}\,\text{value}^2)^{1/2} \Pr(E_1^c)^{1/2}
= O(K) \cdot O(K^{-1}) = o(K)$. Against $E_2^c$: $E_2$ concerns only
the training path, independent of the evaluation path, so
$\mathbb{E}[|\text{value}|\,\mathbf{1}_{E_2^c}]
= \mathbb{E}[|\text{value}|] \cdot \Pr(E_2^c) = O(K) \cdot o(1)
= o(K)$. Hence
$V^{\pol^{\varepsilon}}(K) \geq
\mathbb{E}\bigl[ \text{value} \cdot \mathbf{1}_{E_1 \cap E_2} \bigr]
- o(K)$.

\emph{Step 2 (dispatch reward).} On the good event, by
Lemmas~\ref{lem:agree}--\ref{lem:track} the served set matches the
fluid-served set up to counting fluctuations of
$O(\sqrt{K \log K})$ per lane. With rewards bounded by $r_{\max}$ and
within-cell dispersion $\delta_K$,
\begin{equation}
  \sum_{\text{served}} r
  \;\geq\; K \int_0^T \sum_{\ell} \bar{r}_{\ell} u_{\ell}\, dt
  \;-\; r_{\max}\, O(\sqrt{K \log K})
  \;-\; \delta_K\, O(K).
\end{equation}

\emph{Step 3 (terminal value).} The realized terminal value credits
idle resources at their market and in-transit resources at their
destination, matching $\tilde{z}$ in \eqref{eq:fluid}. The idle part
deviates from $\omega \sum_m V(m) K z_m(T)$ by at most
$\omega V_{\max} D(T)$; the in-transit part per lane is
$S_{\ell}(T) - S_{\ell}(T - \tau_{\ell})$, which tracks the fluid
pipeline mass $K \int_{T - \tau_{\ell}}^{T} u_{\ell}$ within twice the
dispatch deviation of Lemma~\ref{lem:track}, Step~2. Both errors are
$O(\sqrt{K \log K})$.

\emph{Step 4 (sum the ledger).} Every loss line is $o(K)$:
fluctuations $O(\sqrt{K \log K})$; dispersion $\delta_K K$ with
$\delta_K \to 0$; the bad events, $o(K)$ by Step~1.
Hence $V^{\pol^{\varepsilon}}(K) \geq V_f K - o(K)$.
\end{proof}

\begin{remark}[Why no mean-field analysis is needed]
\label{rem:whynochaos}
The proof never couples interacting particles: state-blindness of the
dual-price policy removes the feedback from fleet state to decisions,
so the only interaction channel is the feasibility guard, which
subcriticality (A1) keeps inactive with high probability. This is a
structural advantage of price-based policies over state-feedback
policies (e.g., rollout), for which the corresponding limit theorem
does require a propagation-of-chaos argument. It also localizes
exactly what breaks in the critical regime: $\tau^{\star} \leq T$
becomes likely, forced rejections re-introduce state feedback, and the
gradient rule loses its complementary-slackness status
(\S\ref{sec:limits}).
\end{remark}

\section{Reproducibility}
\label{sec:repro}

Python 3.10 or newer, standard library only. Headline commands: the
30-seed program (\path{scripts/run_30seed_program.sh}: Phase~A policy
comparisons via \path{scripts/run_dual_price_experiment.py}, Phase~B
certificates via \path{scripts/run_lagrangian_bound.py} with
\texttt{--workers 4}; the corrected \texttt{mild} rows with fitted
tables are under \path{benchmark_runs/v04_dev/seed30_mild_fitted/}), price/value fitting
(\path{scripts/fit_dual_prices.py}, \path{scripts/fit_value_togo.py}),
scaling cells (\path{configs/freightbidbench_v04_dev_scaling.json}),
kernel search with exact certificates
(\path{scripts/find_gap_kernel.py}). The public v0.4 contract
(\path{configs/freightbidbench_v04_scenarios.json},
\texttt{policy-set-v0.4.0}) includes both dual policies in the default
policy set; artifacts under \path{benchmark_runs/v04_dev/}; manifests
and per-iteration dual checkpoints throughout.

\bibliographystyle{plainnat}
\bibliography{references}

\end{document}